  \definecolor{orange}{HTML}{ff7f0e}
  \definecolor{blue}{HTML}{1f77b4}
\def\@onedot{\ifx\@let@token.\else.\null\fi\xspace}
\DeclareRobustCommand\onedot{\futurelet\@let@token\@onedot}
\definecolor{blue1}{RGB}{0,128,255}
\definecolor{blue3}{RGB}{0,0,128}
\definecolor{darkpastelgreen}{rgb}{0.01, 0.75, 0.24}
\definecolor{cerulean}{rgb}{0.0, 0.48, 0.65}
\newcommand{\mbb}[1]{\mathbb{#1}}
\newcommand{\mcal}[1]{\mathcal{#1}}
\def\ie{\emph{i.e}\onedot}
\def\cf{\emph{cf}\onedot}
\def\vs{\emph{vs}\onedot}
\definecolor{darkgreen}{rgb}{0,0.6,0}
\newtheorem{proposition}{Proposition}
\def\eqref#1{equation~\ref{#1}}
\def\1{\bm{1}}
\def\rvx{{\mathbf{x}}}
\def\rvy{{\mathbf{y}}}
\def\rvz{{\mathbf{z}}}
\def\vtheta{{\bm{\theta}}}
\def\vphi{{\bm{\phi}}}
\def\vf{{\bm{f}}}
\def\vs{{\bm{s}}}
\def\vx{{\bm{x}}}
\def\vy{{\bm{y}}}
\def\mI{{\bm{I}}}
\DeclareMathAlphabet{\mathsfit}{\encodingdefault}{\sfdefault}{m}{sl}
\SetMathAlphabet{\mathsfit}{bold}{\encodingdefault}{\sfdefault}{bx}{n}
\newcommand{\ud}{\mathop{}\!\mathrm{d}}
\newcommand{\norm}[1]{\left\lVert#1\right\rVert}
\newenvironment{customprop}[1]
{\innercustomprop}
{\endinnercustomprop}
\title{Improved Techniques for Training \\Consistency Models}
\author{%
  Yang Song \& Prafulla Dhariwal\\
  OpenAI\\
  \texttt{\{songyang,prafulla\}@openai.com}
}
\begin{document}

\maketitle
\begin{abstract}
Consistency models are a nascent family of generative models that can sample high quality data in one step without the need for adversarial training. Current consistency models achieve optimal sample quality by distilling from pre-trained diffusion models and employing learned metrics such as LPIPS. However, distillation limits the quality of consistency models to that of the pre-trained diffusion model, and LPIPS causes undesirable bias in evaluation. To tackle these challenges, we present improved techniques for \emph{consistency training}, where consistency models learn directly from data without distillation. We delve into the theory behind consistency training and identify a previously overlooked flaw, which we address by eliminating Exponential Moving Average from the teacher consistency model. To replace learned metrics like LPIPS, we adopt Pseudo-Huber losses from robust statistics. Additionally, we introduce a lognormal noise schedule for the consistency training objective, and propose to double total discretization steps every set number of training iterations. Combined with better hyperparameter tuning, these modifications enable consistency models to achieve FID scores of 2.51 and 3.25 on CIFAR-10 and ImageNet $64\times 64$ respectively in a single sampling step. These scores mark a 3.5$\times$ and 4$\times$ improvement compared to prior consistency training approaches. Through two-step sampling, we further reduce FID scores to 2.24 and 2.77 on these two datasets, surpassing those obtained via distillation in both one-step and two-step settings, while narrowing the gap between consistency models and other state-of-the-art generative models.

\end{abstract}

\section{Introduction}

\thispagestyle{empty}

Consistency models \citep{song2023consistency} are an emerging family of generative models that produce high-quality samples using a single network evaluation. Unlike GANs \citep{goodfellow2014generative}, consistency models are not trained with adversarial optimization and thus sidestep the associated training difficulty. Compared to score-based diffusion models \citep{sohl2015deep,song2019generative,song2020improved,ho2020denoising,song2021scorebased}, consistency models do not require numerous sampling steps to generate high-quality samples. They are trained to generate samples in a single step, but still retain important advantages of diffusion models, such as the flexibility to exchange compute for sample quality through multistep sampling, and the ability to perform zero-shot data editing.

We can train consistency models using either consistency distillation (CD) or consistency training (CT). The former requires pre-training a diffusion model and distilling the knowledge therein into a consistency model. The latter allows us to train consistency models directly from data, establishing them as an independent family of generative models. Previous work \citep{song2023consistency} demonstrates that CD significantly outperforms CT. However, CD adds computational overhead to the training process since it requires learning a separate diffusion model. Additionally, distillation limits the sample quality of the consistency model to that of the diffusion model. To avoid the downsides of CD and to position consistency models as an independent family of generative models, we aim to improve CT to either match or exceed the performance of CD.

For optimal sample quality, both CD and CT rely on learned metrics like the Learned Perceptual Image Patch Similarity (LPIPS) \citep{zhang2018unreasonable} in previous work \citep{song2023consistency}. However, depending on LPIPS has two primary downsides. Firstly, there could be potential bias in evaluation since the same ImageNet dataset \citep{deng2009imagenet} trains both LPIPS and the Inception network in Fr\'echet Inception Distance (FID) \citep{heusel2017gans}, which is the predominant metric for image quality. As analyzed in \citet{kynkanniemi2023the}, improvements of FIDs can come from accidental leakage of ImageNet features from LPIPS, causing inflated FID scores. Secondly, learned metrics require pre-training auxiliary networks for feature extraction. Training with these metrics requires backpropagating through extra neural networks, which increases the demand for compute.

To tackle these challenges, we introduce improved techniques for CT that not only surpass CD in sample quality but also eliminate the dependence on learned metrics like LPIPS. Our techniques are motivated from both theoretical analysis, and comprehensive experiments on the CIFAR-10 dataset \citep{krizhevsky2014cifar}. Specifically, we perform an in-depth study on the empirical impact of weighting functions, noise embeddings, and dropout in CT. Additionally, we identify an overlooked flaw in prior theoretical analysis for CT and propose a simple fix by removing the Exponential Moving Average (EMA) from the teacher network. We adopt Pseudo-Huber losses from robust statistics to replace LPIPS. Furthermore, we study how sample quality improves as the number of discretization steps increases, and utilize the insights to propose a simple but effective curriculum for total discretization steps. Finally, we propose a new schedule for sampling noise levels in the CT objective based on lognormal distributions.

Taken together, these techniques allow CT to attain FID scores of 2.51 and 3.25 for CIFAR-10 and ImageNet $64\times 64$ in one sampling step, respectively. These scores not only surpass CD but also represent improvements of 3.5$\times$ and 4$\times$ over previous CT methods. Furthermore, they significantly outperform the best few-step diffusion distillation techniques for diffusion models even without the need for distillation. By two-step generation, we achieve improved FID scores of 2.24 and 2.77 on CIFAR-10 and ImageNet $64\times 64$, surpassing the scores from CD in both one-step and two-step settings. These results rival many top-tier diffusion models and GANs, showcasing the strong promise of consistency models as a new independent family of generative models.

\section{Consistency models}\label{sec:background}

Central to the formulation of consistency models is the probability flow ordinary differential equation (ODE) from \citet{song2021scorebased}. Let us denote the data distribution by $p_\text{data}(\rvx)$. When we add Gaussian noise with mean zero and standard deviation $\sigma$ to this data, the resulting perturbed distribution is given by $p_\sigma(\rvx) = \int p_\text{data}(\rvy) \mcal{N}(\rvx \mid \rvy, \sigma^2 \mI) \ud \rvy$. The probability flow ODE, as presented in \citet{Karras2022edm}, takes the form of
\begin{align}
    \frac{\ud \rvx}{\ud \sigma} = -\sigma \nabla_\rvx \log p_{\sigma}(\rvx)\quad \sigma \in[\sigma_\text{min}, \sigma_\text{max}],\label{eq:prob-flow-ode}
\end{align}
where the term $\nabla_\rvx \log p_{\sigma}(\rvx)$ is known as the \emph{score function} of $p_{\sigma}(\rvx)$ \citep{song2019sliced,song2019generative,song2020improved,song2021scorebased}. Here $\sigma_\text{min}$ is a small positive value such that $p_{\sigma_\text{min}}(\rvx) \approx p_\text{data}(\rvx)$, introduced to avoid numerical issues in ODE solving. Meanwhile, $\sigma_\text{max}$ is sufficiently large so that $p_{\sigma}(\rvx) \approx \mathcal{N}(\bm{0}, \sigma_\text{max}^2 \mathbf{I})$. Following \citet{Karras2022edm,song2023consistency}, we adopt $\sigma_\text{min} = 0.002$, and $\sigma_\text{max} = 80$ throughout the paper. Crucially, solving the probability flow ODE from noise level $\sigma_1$ to $\sigma_2$ allows us to transform a sample $\rvx_{\sigma_1} \sim p_{\sigma_1}(\rvx)$ into $\rvx_{\sigma_2} \sim p_{\sigma_2}(\rvx)$.

The ODE in \cref{eq:prob-flow-ode} establishes a bijective mapping between a noisy data sample $\rvx_\sigma \sim p_{\sigma}(\rvx)$ and $\rvx_{\sigma_\text{min}} \sim p_{\sigma_\text{min}}(\rvx) \approx p_\text{data}(\rvx)$. This mapping, denoted as $\vf^*: (\rvx_\sigma, \sigma) \mapsto \rvx_{\sigma_\text{min}}$, is termed the \emph{consistency function}. By its very definition, the consistency function satisfies the \emph{boundary condition} $\vf^*(\rvx, \sigma_\text{min}) = \rvx$. A \emph{consistency model}, which we denote by $\vf_\vtheta(\rvx, \sigma)$, is a neural network trained to approximate the consistency function $\vf^*(\rvx, \sigma)$. To meet the boundary condition, we follow \citet{song2023consistency} to parameterize the consistency model as
\begin{align}
\vf_\vtheta(\rvx, \sigma) = c_\text{skip}(\sigma) \rvx + c_\text{out}(\sigma) \bm{F}_\vtheta(\rvx, \sigma),
\end{align}
where $\bm{F}_\vtheta(\rvx, \sigma)$ is a free-form neural network, while $c_\text{skip}(\sigma)$ and $c_\text{out}(\sigma)$ are differentiable functions such that $c_\text{skip}(\sigma_\text{min}) = 1$ and $c_\text{out}(\sigma_\text{min}) = 0$.

To train the consistency model, we discretize the probability flow ODE using a sequence of noise levels $\sigma_\text{min} = \sigma_1 < \sigma_2 < \cdots < \sigma_N = \sigma_\text{max}$, where we follow \citet{Karras2022edm,song2023consistency} in setting $\sigma_i = (\sigma_\text{min}^{1/\rho} + \frac{i-1}{N - 1}(\sigma_\text{max}^{1/\rho} - \sigma_\text{min}^{1/\rho}))^\rho$ for $i \in \llbracket 1, N \rrbracket$, and $\rho = 7$, where $\llbracket a, b \rrbracket$ denotes the set of integers $\{a, a+1, \cdots, b\}$. The model is trained by minimizing the following \emph{consistency matching} (CM) loss over $\vtheta$:
\begin{align}
\mcal{L}^N(\vtheta, \vtheta^{-}) = \mbb{E}\left[ \lambda(\sigma_i) d(\vf_\vtheta(\rvx_{\sigma_{i+1}}, \sigma_{i+1}), \vf_{\vtheta^{-}}(\breve{\rvx}_{\sigma_i}, \sigma_i)) \right],\label{eq:cm-loss}
\end{align}
where $\breve{\rvx}_{\sigma_i} = \rvx_{\sigma_{i+1}} - (\sigma_i - \sigma_{i+1})\sigma_{i+1} \nabla_\rvx \log p_{\sigma_{i+1}}(\rvx)|_{\rvx = \rvx_{\sigma_{i+1}}}$. In \cref{eq:cm-loss}, $d(\vx, \vy)$ is a metric function comparing vectors $\vx$ and $\vy$, and $\lambda(\sigma) > 0$ is a weighting function. Typical metric functions include the squared $\ell_2$ metric $d(\vx, \vy) = \norm{\vx-\vy}_2^2$, and the Learned Perceptual Image Patch Similarity (LPIPS) metric introduced in \citet{zhang2018unreasonable}. The expectation in \cref{eq:cm-loss} is taken over the following sampling process: $i \sim \mcal{U}\llbracket 1, N - 1 \rrbracket$ where $\mcal{U}\llbracket 1, N-1\rrbracket$ represents the uniform distribution over $\{1,2,\cdots, N-1\}$, and $\rvx_{\sigma_{i+1}} \sim p_{\sigma_{i+1}}(\rvx)$. Note that $\breve{\rvx}_{\sigma_{i}}$ is derived from $\rvx_{\sigma_{i+1}}$ by solving the probability flow ODE in the reverse direction for a single step. In \cref{eq:cm-loss}, $\vf_{\vtheta}$ and $\vf_{\vtheta^{-}}$ are referred to as the \emph{student network} and the \emph{teacher network}, respectively. The teacher's parameter $\vtheta^{-}$ is obtained by applying Exponential Moving Average (EMA) to the student's parameter $\vtheta$ during the course of training as follows:
\begin{align}
\vtheta^{-} \gets \operatorname{stopgrad}(\mu \vtheta^{-} + (1 - \mu) \vtheta),\label{eq:ema}
\end{align}
with $0 \leq \mu < 1$ representing the EMA decay rate. Here we explicitly employ the $\operatorname{stopgrad}$ operator to highlight that the teacher network remains fixed for each optimization step of the student network. However, in subsequent discussions, we will omit the $\operatorname{stopgrad}$ operator when its presence is clear and unambiguous. In practice, we also maintain EMA parameters for the student network to achieve better sample quality at inference time. It is clear that as $N$ increases, the consistency model optimized using \cref{eq:cm-loss} approaches the true consistency function. For faster training, \citet{song2023consistency} propose a curriculum learning strategy where $N$ is progressively increased and the EMA decay rate $\mu$ is adjusted accordingly. This curriculum for $N$ and $\mu$ is denoted by $N(k)$ and $\mu(k)$, where $k \in \mbb{N}$ is a non-negative integer indicating the current training step.

Given that $\breve{\rvx}_{\sigma_i}$ relies on the unknown score function $\nabla_\rvx \log p_{\sigma_{i+1}}(\rvx)$, directly optimizing the consistency matching objective in \cref{eq:cm-loss} is infeasible. To circumvent this challenge, \citet{song2023consistency} propose two training algorithms: \emph{consistency distillation} (CD) and \emph{consistency training} (CT). For consistency distillation, we first train a diffusion model $\vs_\vphi(\rvx, \sigma)$ to estimate $\nabla_\rvx \log p_\sigma(\rvx)$ via score matching \citep{hyvarinen-EstimationNonNormalizedStatistical-2005,vincent2011connection,song2019sliced,song2019generative}, then approximate $\breve{\rvx}_{\sigma_i}$ with $\hat{\rvx}_{\sigma_i} = \rvx_{\sigma_{i+1}} - (\sigma_i - \sigma_{i+1})\sigma_{i+1} \vs_\vphi(\rvx_{\sigma_{i+1}}, \sigma_{i+1})$. On the other hand, consistency training employs a different approximation method. Recall that $\rvx_{\sigma_{i+1}} = \rvx + \sigma_{i+1} \rvz$ with $\rvx \sim p_\text{data}(\rvx)$ and $\rvz \sim \mcal{N}(\bm{0}, \mI)$. Using the same $\rvx$ and $\rvz$, \citet{song2023consistency} define $\check{\rvx}_{\sigma_i} = \rvx + \sigma_i \rvz$ as an approximation to $\breve{\rvx}_{\sigma_i}$, which leads to the consistency training objective below:
\begin{align}
\mcal{L}^N_\text{CT}(\vtheta, \vtheta^{-}) = \mbb{E}\left[ \lambda(\sigma_i) d(\vf_\vtheta(\rvx + \sigma_{i+1}\rvz, \sigma_{i+1}), \vf_{\vtheta^{-}}(\rvx + \sigma_i \rvz, \sigma_i)) \right].\label{eq:ct-loss}
\end{align}
As analyzed in \citet{song2023consistency}, this objective is asymptotically equivalent to consistency matching in the limit of $N\to\infty$. We will revisit this analysis in \cref{sec:noema}.

After training a consistency model $\vf_\vtheta(\rvx, \sigma)$ through CD or CT, we can directly generate a sample $\rvx$ by starting with $\rvz \sim \mcal{N}(\bm{0}, \sigma_\text{max}^2\mI)$ and computing $\rvx = \vf_\vtheta(\rvz, \sigma_\text{max})$. Notably, these models also enable multistep generation. For a sequence of indices $1 = i_1 < i_2 < \cdots < i_K = N$, we start by sampling $\rvx_{K} \sim \mcal{N}(\bm{0}, \sigma_\text{max}^2 \mI)$ and then iteratively compute $\rvx_{k} \gets \vf_\vtheta(\rvx_{k+1}, \sigma_{i_{k+1}}) + \sqrt{\sigma_{i_k}^2 - \sigma_\text{min}^2} \rvz_{k}$ for $k = K-1, K-2, \cdots, 1$, where $\rvz_{k} \sim \mcal{N}(\bm{0}, \mI)$. The resulting sample $\rvx_{1}$ approximates the distribution $p_\text{data}(\rvx)$. In our experiments, setting $K=3$ (two-step generation) often enhances the quality of one-step generation considerably, though increasing the number of sampling steps further provides diminishing benefits.

\section{Improved techniques for consistency training}\label{sec:method}

Below we re-examine the design choices of CT in \citet{song2023consistency} and pinpoint modifications that improve its performance, which we summarize in \cref{tab:params}. We focus on CT without learned metric functions. For our experiments, we employ the Score SDE architecture in \citet{song2021scorebased} and train the consistency models for 400,000 iterations on the CIFAR-10 dataset \citep{krizhevsky2014cifar} without class labels. While our primary focus remains on CIFAR-10 in this section, we observe similar improvements on other datasets, including ImageNet $64\times 64$ \citep{deng2009imagenet}. We measure sample quality using Fr\'echet Inception Distance (FID) \citep{heusel2017gans}.

\begin{table}
    \caption{Comparing the design choices for CT in \citet{song2023consistency} versus our modifications. %
    }\label{tab:params}
    \begin{adjustbox}{width=\linewidth}
    \renewcommand{\arraystretch}{1.5}
    \begin{tabular}{@{}p{0.28\linewidth}|l|l@{\hspace*{-2ex}}}
        \Xhline{2\arrayrulewidth}
        & {\bf Design choice in \citet{song2023consistency}} & {\bf Our modifications} \\
        \hline
        EMA decay rate for the teacher network & $\mu(k) = \exp(\frac{s_0 \log \mu_0}{N(k)})$ & $\mu(k) = 0$ \\
        Metric in consistency loss & $d(\vx,\vy) = \operatorname{LPIPS}(\vx, \vy)$ & $d(\vx, \vy) = \sqrt{\norm{\vx - \vy}_2^2 + c^2} - c$ \\
        Discretization curriculum & \pbox[t][8ex]{\linewidth}{ $N(k)=$\\ \hphantom{ww} $\Big\lceil \sqrt{\frac{k}{K}((s_1 + 1)^2 - s_0^2) + s_0^2}-1 \Big \rceil + 1$} & \pbox[t][8ex]{\linewidth}{$N(k) = \min( s_0 2^{\lfloor \frac{k}{K'} \rfloor}, s_1) + 1$, \\\hphantom{wwwwwwww} where $K' = \Big\lfloor \frac{K}{\log_2 \lfloor s_1 / s_0 \rfloor + 1} \Big\rfloor$}\\
        Noise schedule & $\sigma_i$, where $i \sim \mcal{U}\llbracket 1, N(k)-1 \rrbracket$ & \pbox[t][8ex]{\linewidth}{ $\sigma_i$, where $i \sim p(i)$, and $p(i) \propto$ \\ \hphantom{w} $\operatorname{erf}\big(\frac{\log(\sigma_{i+1}) - P_\text{mean}}{\sqrt{2}P_\text{std}}\big) - \operatorname{erf}\big(\frac{\log(\sigma_i) - P_\text{mean}}{\sqrt{2}P_\text{std}}\big)$} \\
        Weighting function & $\lambda(\sigma_i) = 1$ & $\lambda (\sigma_i) = \frac{1}{\sigma_{i+1} - \sigma_i}$ \\
        \hline\hline
        \multirow{5}{*}[-0.8ex]{Parameters} & $s_0 = 2, s_1=150, \mu_0=0.9$ on CIFAR-10 & $s_0=10, s_1=1280$\\
        & $s_0 = 2, s_1 = 200, \mu_0=0.95$ on ImageNet $64\times 64$ & $c=0.00054 \sqrt{d}$, $d$ is data dimensionality\\
        & & $P_\text{mean}=-1.1, P_\text{std}=2.0$\\\cline{2-3}
        & \multicolumn{2}{c}{$k \in \llbracket 0, K \rrbracket$, where $K$ is the total training iterations}\\
        & \multicolumn{2}{c}{$\sigma_i = (\sigma_\text{min}^{1/\rho} + \frac{i-1}{N(k)-1}(\sigma_\text{max}^{1/\rho} - \sigma_\text{min}^{1/\rho}))^\rho$, where $i \in \llbracket 1, N(k) \rrbracket, \rho = 7, \sigma_\text{min}=0.002, \sigma_\text{max}=80$}\\
        \Xhline{2\arrayrulewidth}
    \end{tabular}
    \renewcommand{\arraystretch}{1}
    \end{adjustbox}
    \vspace{-1em}
\end{table}

\subsection{Weighting functions, noise embeddings, and dropout}\label{sec:improvements}

We start by exploring several hyperparameters that are known to be important for diffusion models, including the weighting function $\lambda(\sigma)$, the embedding layer for noise levels, and dropout \citep{ho2020denoising,song2021scorebased,dhariwal2021diffusion,Karras2022edm}. We find that proper selection of these hyperparameters greatly improve CT when using the squared $\ell_2$ metric.

The default weighting function in \citet{song2023consistency} is uniform, \ie, $\lambda(\sigma) \equiv 1$. This assigns equal weights to consistency losses at all noise levels, which we find to be suboptimal. We propose to modify the weighting function so that it reduces as noise levels increase. The rationale is that errors from minimizing consistency losses in smaller noise levels can influence larger ones and therefore should be weighted more heavily. Specifically, our weighting function (\cf, \cref{tab:params}) is defined as $\lambda(\sigma_i) = \frac{1}{\sigma_{i+1} - \sigma_i}$. The default choice for $\sigma_i$, given in \cref{sec:background}, ensures that $\lambda(\sigma_i) = \frac{1}{\sigma_{i+1} - \sigma_i}$ reduces monotonically as $\sigma_i$ increases, thus assigning smaller weights to higher noise levels. As shown in \cref{fig:improved_baselines}, this refined weighting function notably improves the sample quality in CT with the squared $\ell_2$ metric.

In \citet{song2023consistency}, Fourier embedding layers \citep{tancik2020fourier} and positional embedding layers \citep{vaswani2017attention} are used to embed noise levels for CIFAR-10 and ImageNet $64\times 64$ respectively. %
It is essential that noise embeddings are sufficiently sensitive to minute differences to offer training signals, yet too much sensitivity can lead to training instability. As shown in \cref{fig:ctct}, high sensitivity can lead to the divergence of continuous-time CT \citep{song2023consistency}. This is a known challenge in \citet{song2023consistency}, which they circumvent by initializing the consistency model with parameters from a pre-trained diffusion model. In \cref{fig:ctct}, we show continuous-time CT on CIFAR-10 converges with random initial parameters, provided we use a less sensitive noise embedding layer with a reduced Fourier scale parameter, as visualized in \cref{fig:fourier_sensitivity}. %
For discrete-time CT, models are less affected by the sensitivity of the noise embedding layers, but as shown in \cref{fig:improved_baselines}, reducing the scale parameter in Fourier embedding layers from the default value of 16.0 to a smaller value of 0.02 still leads to slight improvement of FIDs on CIFAR-10. For ImageNet models, we employ the default positional embedding, as it has similar sensitivity to Fourier embedding with scale 0.02 (see \cref{fig:fourier_sensitivity}). %

Previous experiments with consistency models in \citet{song2023consistency} always employ zero dropout, motivated by the fact that consistency models generate samples in a single step, unlike diffusion models that do so in multiple steps. Therefore, it is intuitive that consistency models, facing a more challenging task, would be less prone to overfitting and need less regularization than their diffusion counterparts. Contrary to our expectations, we discovered that using larger dropout than diffusion models improves the sample quality of consistency models. Specifically, as shown in \cref{fig:improved_baselines}, a dropout rate of 0.3 for consistency models on CIFAR-10 obtains better FID scores. For ImageNet $64\times 64$, we find it beneficial to apply dropout of 0.2 to layers with resolution less than or equal to $16\times 16$, following \citet{hoogeboom2023simple}. We additionally ensure that the random number generators for dropout share the same states across the student and teacher networks when optimizing the CT objective in \cref{eq:ct-loss}.

By choosing the appropriate weighting function, noise embedding layers, and dropout, we significantly improve the sample quality of consistency models using the squared $\ell_2$ metric, closing the gap with the original CT in \citet{song2023consistency} that relies on LPIPS (see \cref{fig:improved_baselines}). Although our modifications do not immediately improve the sample quality of CT with LPIPS, combining with additional techniques in \cref{sec:noema} will yield significant improvements for both metrics.
\begin{figure}
    \centering
    \begin{subfigure}[b]{0.336\textwidth}
        \centering
        \includegraphics[width=\linewidth]{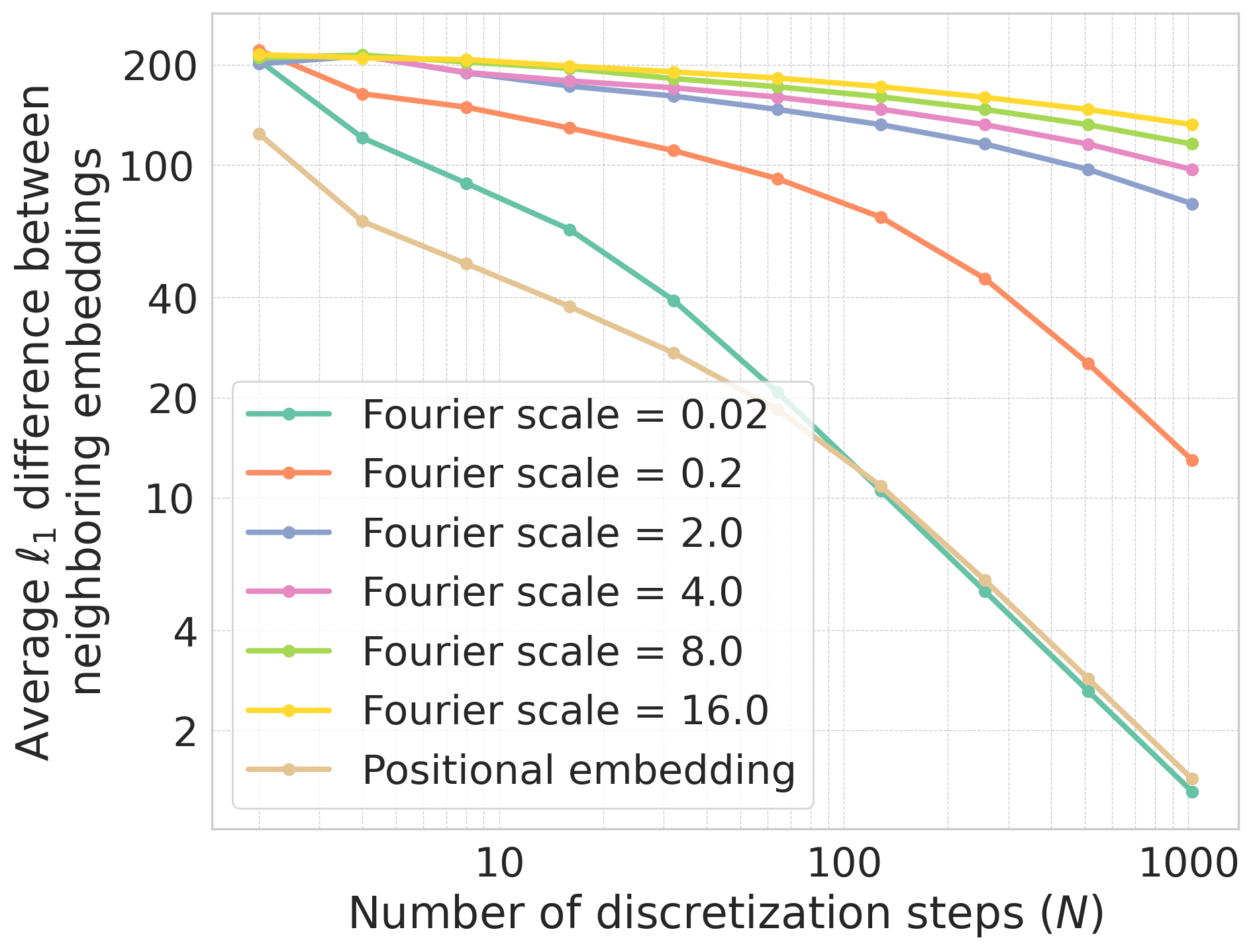}
        \caption{Sensitivity of noise embeddings.}\label{fig:fourier_sensitivity}
    \end{subfigure}%
    \begin{subfigure}[b]{0.32\textwidth}
        \centering
        \includegraphics[width=\textwidth]{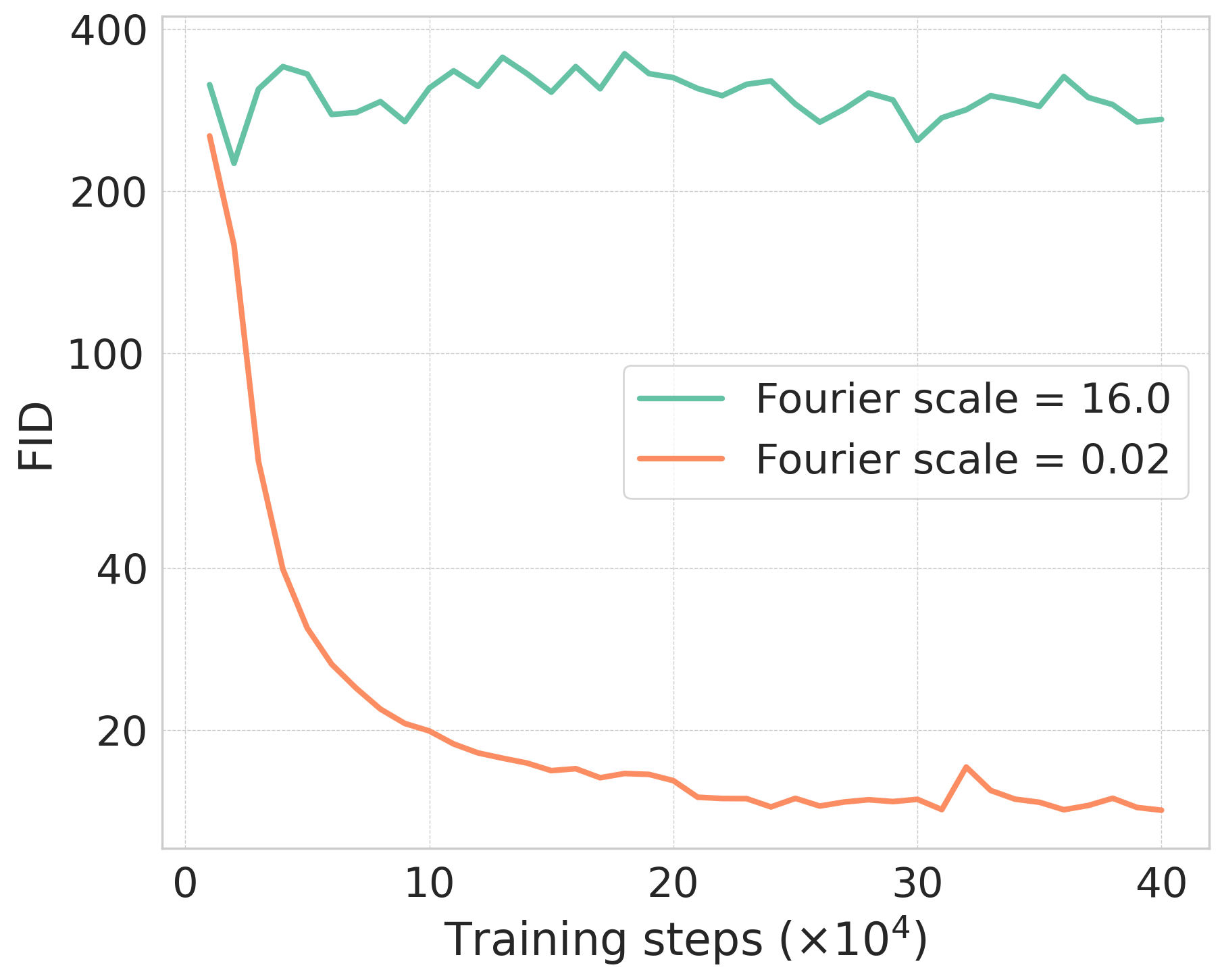}
        \caption{Continuous-time CT.}\label{fig:ctct}
    \end{subfigure}
    \begin{subfigure}[b]{0.32\textwidth}
        \centering
        \includegraphics[width=\textwidth]{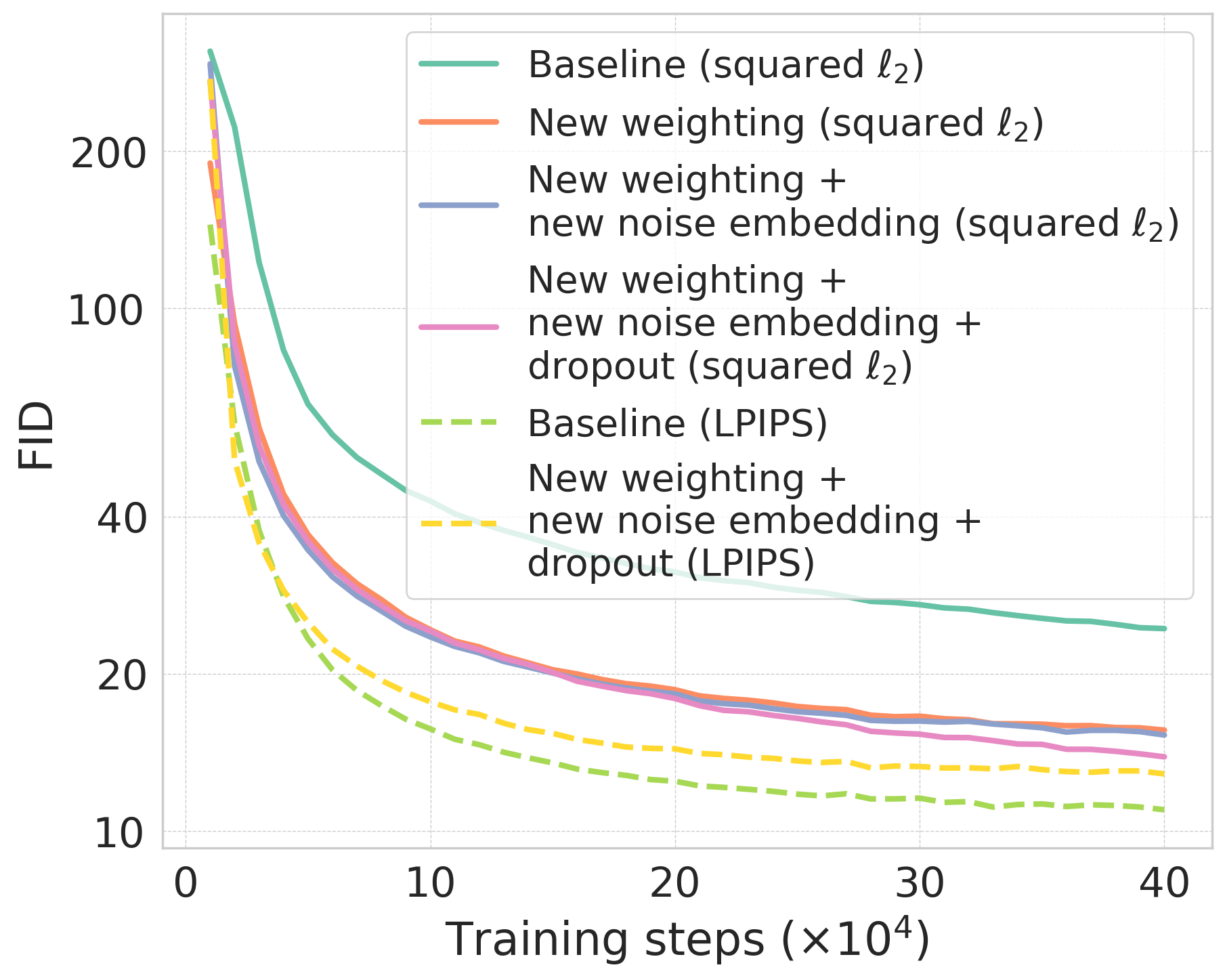}
        \caption{Ablation study.}\label{fig:improved_baselines}
    \end{subfigure}
    \caption{(a) As the Fourier scale parameter decreases, Fourier noise embeddings become less sensitive to minute noise differences. This sensitivity is closest to that of positional embeddings when the Fourier scale is set to 0.02. (b) Continuous-time CT diverges when noise embeddings are overly sensitive to minor noise differences. (c) An ablation study examines the effects of our selections for weighting function ($\frac{1}{\sigma_{i+1} - \sigma_i}$), noise embedding (Fourier scale $= 0.02$), and dropout ($=0.3$) on CT using the squared $\ell_2$ metric. Here baseline models for both metrics follow configurations in \citet{song2023consistency}. All models are trained on CIFAR-10 without class labels.}\label{fig:improved}
\end{figure}
\subsection{Removing EMA for the teacher network}\label{sec:noema}

When training consistency models, we minimize the discrepancy between models evaluated at adjacent noise levels. Recall from \cref{sec:background} that the model with the lower noise level is termed the \emph{teacher network}, and its counterpart the \emph{student network}. While \citet{song2023consistency} maintains EMA parameters for both networks with potentially varying decay rates, we present a theoretical argument indicating that the EMA decay rate for the teacher network should always be zero for CT, although it can be nonzero for CD. We revisit the theoretical analysis in \citet{song2023consistency} to support our assertion and provide empirical evidence that omitting EMA from the teacher network in CT notably improves the sample quality of consistency models.

To support the use of CT, \citet{song2023consistency} present two theoretical arguments linking the CT and CM objectives as $N \to \infty$. The first line of reasoning, which we call Argument (i), draws upon Theorem 2 from \citet{song2023consistency} to show that under certain regularity conditions, $\mcal{L}_\text{CT}^N(\vtheta, \vtheta^{-}) = \mcal{L}^N(\vtheta, \vtheta^-) + o(\Delta \sigma)$. That is, when $N \to \infty$, we have $\Delta \sigma \to 0$ and hence $\mcal{L}_\text{CT}^N(\vtheta, \vtheta^{-})$ converges to $\mcal{L}^N(\vtheta, \vtheta^-)$ asymptotically. The second argument, called Argument (ii), is grounded in Theorem 6 from \citet{song2023consistency} which asserts that when $\vtheta^{-} = \vtheta$, both $\lim_{N\to \infty}(N-1) \nabla_\vtheta \mcal{L}^N(\vtheta, \vtheta^{-})$ and $\lim_{N\to \infty}(N-1) \nabla_\vtheta \mcal{L}^N_\text{CT}(\vtheta, \vtheta^{-})$ are well-defined and identical. This suggests that after scaling by $N-1$, gradients of the CT and CM objectives match in the limit of $N \to \infty$, leading to equivalent training dynamics. Unlike Argument (i), Argument (ii) is valid only when $\vtheta^{-} = \vtheta$, which can be enforced by setting the EMA decay rate $\mu$ for the teacher network to zero in \cref{eq:ema}.

We show this inconsistency in requirements for Argument (i) and (ii) to hold is caused by flawed theoretical analysis of the former. Specifically, Argument (i) fails if $\lim_{N \to \infty}\mcal{L}^N(\vtheta, \vtheta^{-})$ is not a valid objective for learning consistency models, which we show can happen when $\vtheta^{-} \neq \vtheta$. To give a concrete example, consider a data distribution $p_\text{data}(x) = \delta(x - \xi)$, which leads to $p_\sigma(x) = \mcal{N}(x; \xi, \sigma^2)$ and a ground truth consistency function $f^*(x, \sigma) = \frac{\sigma_\text{min}}{\sigma}x + \left(1 - \frac{\sigma_\text{min}}{\sigma}\right)\xi$. Let us define the consistency model as $f_\theta(x, \sigma) = \frac{\sigma_\text{min}}{\sigma}x + \left(1 - \frac{\sigma_\text{min}}{\sigma}\right)\theta$. In addition, let $\sigma_i = \sigma_\text{min} + \frac{i-1}{N-1}(\sigma_\text{max} - \sigma_\text{min})$ for $i \in \llbracket 1, N \rrbracket$ be the noise levels, where we have $\Delta \sigma = \frac{\sigma_\text{max} - \sigma_\text{min}}{N-1}$. Given $z\sim \mcal{N}(0, 1)$ and $x_{\sigma_{i+1}} = \xi + \sigma_{i+1} z$, it is straightforward to show that $\breve{x}_{\sigma_i} = x_{\sigma_{i+1}} - \sigma_{i+1}(\sigma_i - \sigma_{i+1}) \nabla_x \log p_\sigma(x_{\sigma_{i+1}})$ simplifies to $\check{x}_{\sigma_i} = \xi + \sigma_i z$. As a result, the objectives for CM and CT align perfectly in this toy example. Building on top of this analysis, the following result proves that $\lim_{N \to \infty}\mcal{L}^N(\theta, \theta^{-})$ here is not amenable for learning consistency models whenever $\theta^{-} \neq \theta$.%
\begin{proposition}\label{prop}
    Given the notations introduced earlier, and using the uniform weighting function $\lambda(\sigma) = 1$ along with the squared $\ell_2$ metric, we have
    \begin{gather}
        \lim_{N \to \infty}\mcal{L}^N(\theta, \theta^-) = \lim_{N \to \infty}\mcal{L}^N_\text{CT}(\theta, \theta^-) = \mbb{E}\Big[ \big(1 - \frac{\sigma_\text{min}}{\sigma_i} \big)^2(\theta - \theta^{-})^2\Big]\quad \text{if $\theta^{-} \neq \theta$}\label{eq:limit}\\
        \lim_{N \to \infty}\frac{1}{\Delta \sigma} \frac{\ud \mcal{L}^N(\theta, \theta^{-})}{\ud \theta} = \begin{cases}
            \frac{\ud}{\ud\theta} \mbb{E}\Big[ \frac{\sigma_\text{min}}{\sigma_i^2}\Big( 1 - \frac{\sigma_\text{min}}{\sigma_i} \Big)(\theta - \xi)^2 \Big], &\quad \theta^{-} = \theta\\
            +\infty, &\quad \theta^{-} < \theta \\
            -\infty, &\quad \theta^{-} > \theta \\
        \end{cases}\label{eq:grad_limit}
    \end{gather}
\end{proposition}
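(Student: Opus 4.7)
The plan is to carry out a direct computation in this one-parameter toy model, exploiting the linearity of $f_\theta$ in both its arguments and the identity $\breve{x}_{\sigma_i} = \check{x}_{\sigma_i} = \xi + \sigma_i z$ already noted above. Substituting the explicit parametrization and canceling the noise term (which enters both $f_\theta(x_{\sigma_{i+1}}, \sigma_{i+1})$ and $f_{\theta^-}(\breve{x}_{\sigma_i}, \sigma_i)$ with the same coefficient $\sigma_\text{min}$) yields the deterministic identity
\begin{align*}
f_\theta(x_{\sigma_{i+1}}, \sigma_{i+1}) - f_{\theta^-}(\breve{x}_{\sigma_i}, \sigma_i) = \Bigl(1 - \frac{\sigma_\text{min}}{\sigma_i}\Bigr)(\theta - \theta^-) + \frac{\sigma_\text{min}\,\Delta\sigma}{\sigma_i \sigma_{i+1}}(\theta - \xi),
\end{align*}
with $\Delta\sigma = \sigma_{i+1} - \sigma_i$ constant across $i$. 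The coincidence $\breve{x}_{\sigma_i} = \check{x}_{\sigma_i}$ also gives $\mcal{L}^N = \mcal{L}^N_\text{CT}$ for every finite $N$, so the first equality in \cref{eq:limit} is free and only the middle-to-right equality remains to be proven.

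To establish \cref{eq:limit}, I would square the display above (the right-hand side is $z$-independent, so the expectation over $z$ is trivial) and then average over $i \sim \mcal{U}\llbracket 1, N-1 \rrbracket$. Writing $a_i := 1 - \sigma_\text{min}/\sigma_i$, this expands as
\begin{align*}
\mcal{L}^N(\theta, \theta^-) = \mbb{E}_i\!\left[ a_i^2 (\theta - \theta^-)^2 \right] + O(\Delta\sigma),
\end{align*}
where the remainder absorbs the cross term (linear in $\Delta\sigma$) and the last square (quadratic in $\Delta\sigma$). Sending $N \to \infty$ makes $\Delta\sigma \to 0$, and the Riemann-sum expectation over the grid $\{\sigma_i\}$ converges to a uniform expectation over $[\sigma_\text{min}, \sigma_\text{max}]$, which is exactly the claimed formula.

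For \cref{eq:grad_limit} I would differentiate the squared difference in $\theta$ before averaging; a short computation gives
\begin{align*}
\frac{\ud}{\ud\theta}\bigl(\text{squared difference}\bigr) = 2 a_i^2 (\theta - \theta^-) + \frac{2 a_i \sigma_\text{min}\,\Delta\sigma}{\sigma_i \sigma_{i+1}}\bigl[(\theta - \xi) + (\theta - \theta^-)\bigr] + \frac{2 \sigma_\text{min}^2 \Delta\sigma^2 (\theta - \xi)}{\sigma_i^2 \sigma_{i+1}^2}.
\end{align*}
Dividing by $\Delta\sigma$ and averaging in $i$ produces a leading piece $2 \mbb{E}_i[a_i^2] (\theta - \theta^-)/\Delta\sigma$ plus two terms with finite limits. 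If $\theta^- = \theta$ the leading piece vanishes identically, the $\Delta\sigma^2$ term vanishes in the limit, and the middle term (with $\sigma_{i+1} \to \sigma_i$) converges to $\mbb{E}[2 a_i \sigma_\text{min} (\theta - \xi)/\sigma_i^2]$, which agrees with $\frac{\ud}{\ud\theta}\mbb{E}[\frac{\sigma_\text{min}}{\sigma_i^2}(1 - \frac{\sigma_\text{min}}{\sigma_i})(\theta - \xi)^2]$ after pulling $\ud/\ud\theta$ back out. If $\theta^- \neq \theta$, the leading piece is nonzero and multiplied by $1/\Delta\sigma \to \infty$; since $\mbb{E}_i[a_i^2] > 0$ (because $a_i > 0$ for $i \geq 2$), the sign of $\theta - \theta^-$ alone determines whether the limit is $+\infty$ or $-\infty$, covering the remaining two cases.

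The main obstacle will be the routine but finicky limit bookkeeping: showing that the Riemann sum over $\{\sigma_i\}$ genuinely converges to the continuous uniform integral on $[\sigma_\text{min}, \sigma_\text{max}]$, and justifying the exchange of $\ud/\ud\theta$ with $\mbb{E}$ in the $\theta^- = \theta$ branch. Both follow from dominated convergence, since all integrands in sight are polynomials in $\theta$ whose coefficients depend only on $\sigma \in [\sigma_\text{min}, \sigma_\text{max}]$ and are thus uniformly bounded on compact parameter ranges. Once this is in place, every assertion in the proposition reduces to the three displayed identities above.
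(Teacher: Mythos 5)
Your proposal is correct and follows essentially the same route as the paper's proof: exploit $\breve{x}_{\sigma_i}=\check{x}_{\sigma_i}$ to identify the CM and CT objectives, substitute the linear parameterization so the $\sigma_\text{min}z$ terms cancel, and expand in $\Delta\sigma$ before taking $N\to\infty$ for both the loss and its $\theta$-gradient. The only difference is cosmetic: you work with the exact decomposition $\big(1-\tfrac{\sigma_\text{min}}{\sigma_i}\big)(\theta-\theta^-)+\tfrac{\sigma_\text{min}\Delta\sigma}{\sigma_i\sigma_{i+1}}(\theta-\xi)$ (which checks out), whereas the paper carries Taylor expansions with $o(\Delta\sigma)$ remainders, so your bookkeeping is if anything slightly cleaner.
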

\begin{proof}
    See \cref{app:proof}.
\end{proof}

Recall that typically $\theta^{-} \neq \theta$ when $\mu \neq 0$. In this case, \cref{eq:limit} shows that the CM/CT objective is independent of $\xi$, thus providing no signal of the data distribution and are therefore impossible to train correct consistency models. This directly refutes Argument (i). In contrast, when we set $\mu=0$ to ensure $\theta^{-} = \theta$, \cref{eq:grad_limit} indicates that the gradient of the CM/CT objective, when scaled by $1/\Delta \sigma$, converges to the gradient of the mean squared error between $\theta$ and $\xi$. Optimizing this gradient consequently yields $\theta = \xi$, accurately learning the ground truth consistency function. This analysis is consistent with Argument (ii).

As illustrated in \cref{fig:noema}, discarding EMA from the teacher network notably improves sample quality for CT across both LPIPS and squared $\ell_2$ metrics. The curves labeled ``Improved'' correspond to CT using the improved design outlined in \cref{sec:improvements}. Setting $\mu(k) = 0$ for all training iteration $k$ effectively counters the sample quality degradation of LPIPS caused by the modifications in \cref{sec:improvements}. Combining the strategies from \cref{sec:improvements} with a zero EMA for the teacher, we are able to match the sample quality of the original CT in \citet{song2023consistency} that necessitates LPIPS, by using simple squared $\ell_2$ metrics.

\subsection{Pseudo-Huber metric functions}\label{sec:pseudo_huber}

\begin{figure}
    \centering
    \begin{subfigure}[b]{0.33\textwidth}
        \centering
        \includegraphics[width=\linewidth]{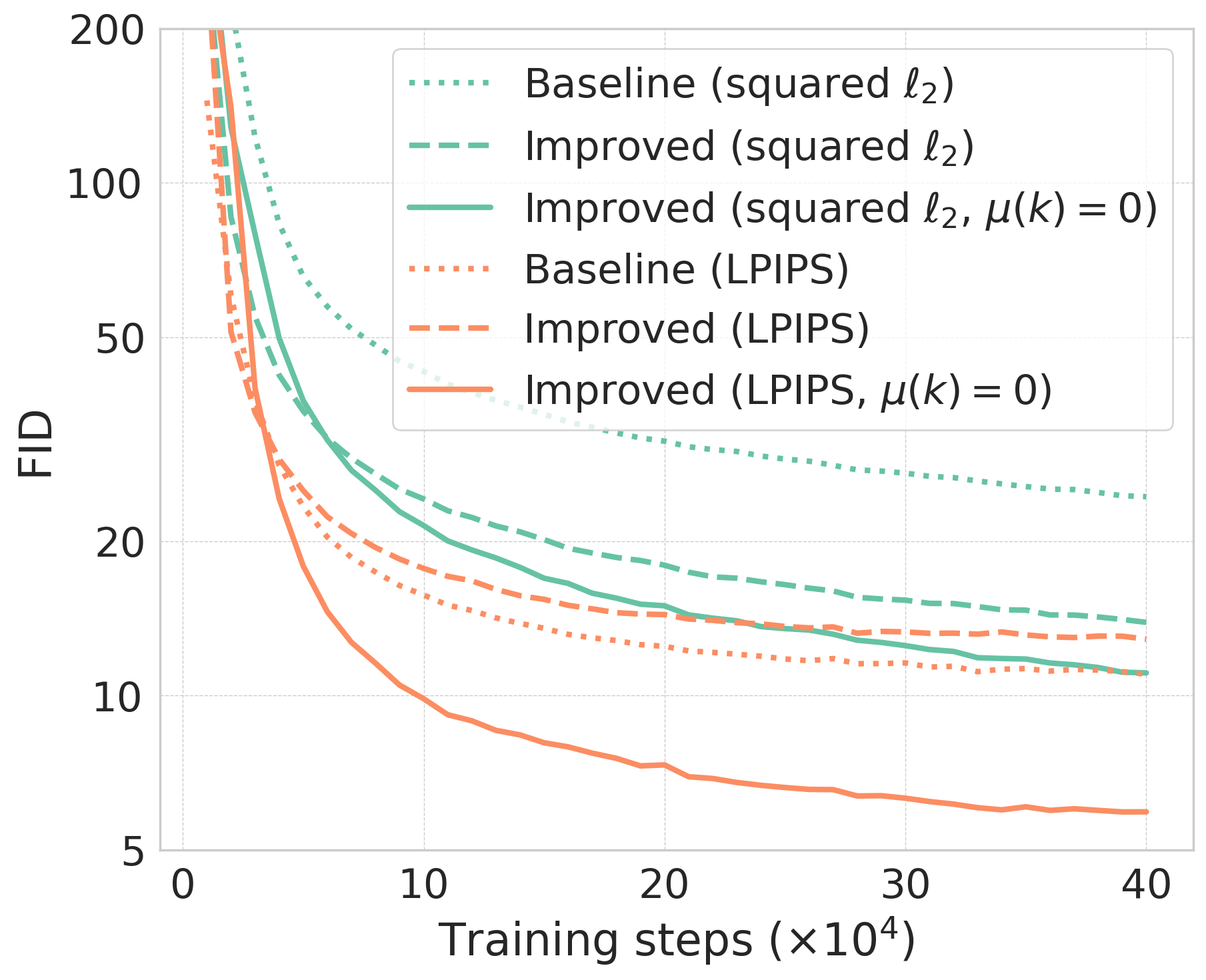}
        \caption{LPIPS \& squared $\ell_2$ metrics.}\label{fig:noema}
    \end{subfigure}%
    \begin{subfigure}[b]{0.33\textwidth}
        \centering
        \includegraphics[width=\linewidth]{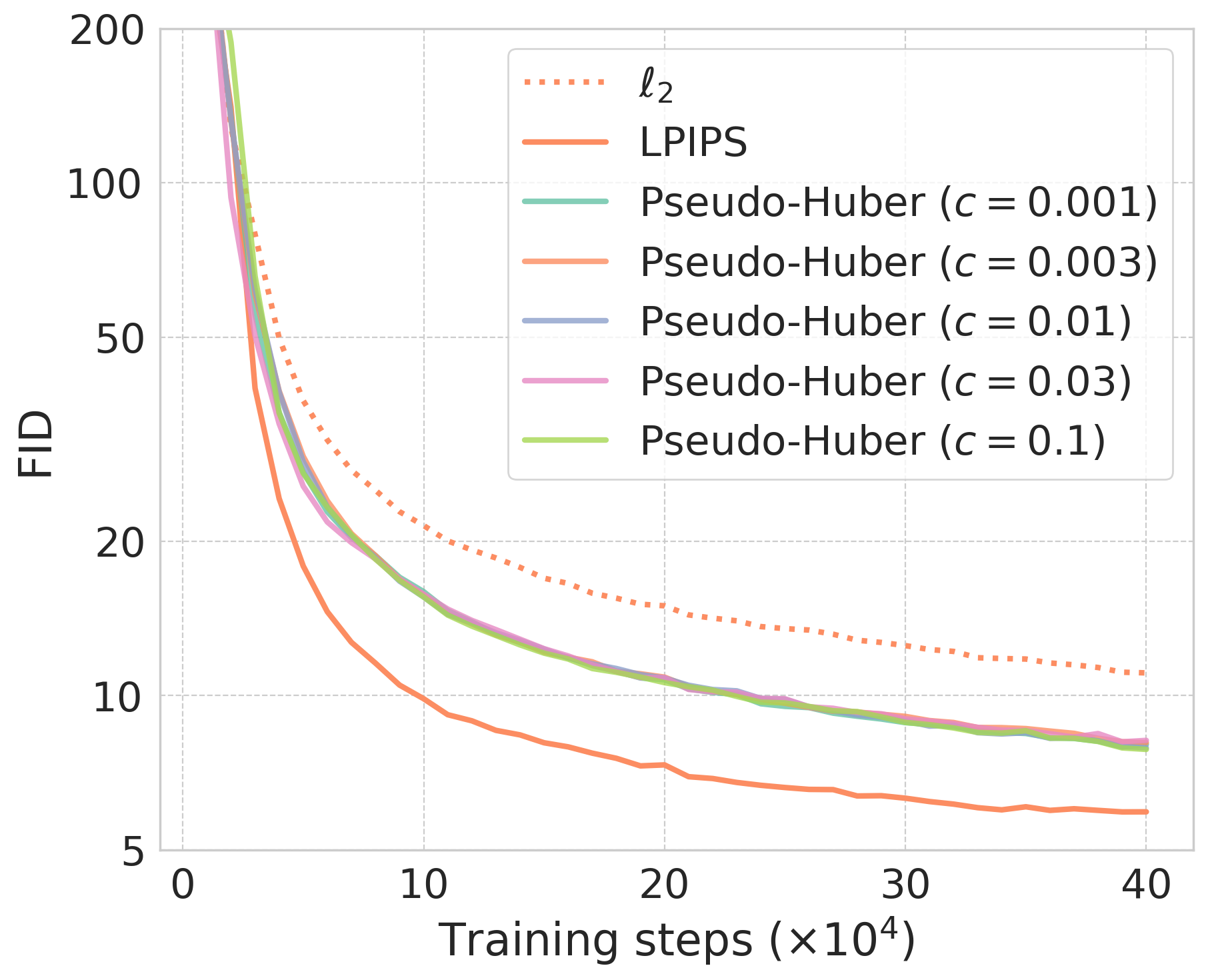}
        \caption{$s_0 = 2, s_1 = 150$}
    \end{subfigure}%
    \begin{subfigure}[b]{0.33\textwidth}
        \centering
        \includegraphics[width=\linewidth]{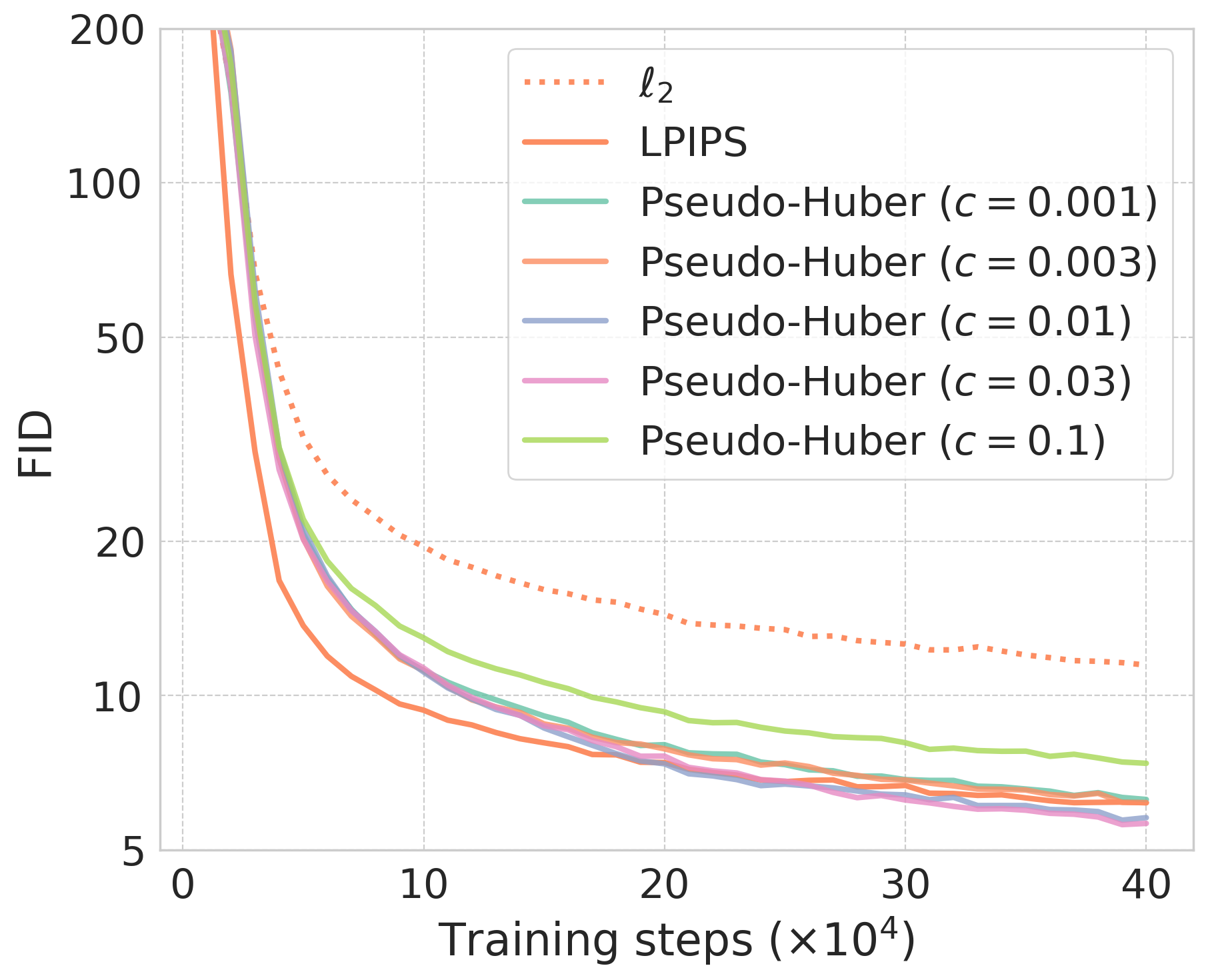}
        \caption{$s_0 = 10, s_1 = 1280$}\label{fig:loss_search_1280}
    \end{subfigure}
    \caption{(a) Removing EMA in the teacher network leads to significant improvement in FIDs. (b, c) Pseudo-Huber metrics significantly improve the sample quality of squared $\ell_2$ metric, and catches up with LPIPS when using overall larger $N(k)$, where the Pseudo-Huber metric with $c=0.03$ is the optimal. All training runs here employ the improved techniques from \cref{sec:improvements,sec:noema}.}\label{fig:loss_search}
\end{figure}

Using the methods from \cref{sec:improvements,sec:noema}, we are able to improve CT with squared $\ell_2$ metric, matching the original CT in \citet{song2023consistency} that utilizes LPIPS. Yet, as shown in \cref{fig:noema}, LPIPS still maintains a significant advantage over traditional metric functions when the same improved techniques are in effect for all. To address this disparity, we adopt the Pseudo-Huber metric family \citep{charbonnier1997deterministic}, defined as
\begin{align}
d(\vx, \vy) = \sqrt{\norm{\vx - \vy}_2^2 + c^2} - c, \label{eq:pseudo_huber}
\end{align}
where $c > 0$ is an adjustable constant. As depicted in \cref{fig:pseudo_huber_shape}, Pseudo-Huber metrics smoothly bridge the $\ell_1$ and squared $\ell_2$ metrics, with $c$ determining the breadth of the parabolic section. In contrast to common metrics like $\ell_0$, $\ell_1$, and $\ell_\infty$, Pseudo-Huber metrics are continuously twice differentiable, and hence meet the theoretical requirement for CT outlined in \citet{song2023consistency}.

Compared to the squared $\ell_2$ metric, the Pseudo-Huber metric is more robust to outliers as it imposes a smaller penalty for large errors than the squared $\ell_2$ metric does, yet behaves similarly for smaller errors. We posit that this added robustness can reduce variance during training. To validate this hypothesis, we examine the $\ell_2$ norms of parameter updates obtained from the Adam optimizer during the course of training for both squared $\ell_2$ and Pseudo-Huber metric functions, and summarize results in \cref{fig:adam}. Our observations confirm that the Pseudo-Huber metric results in reduced variance relative to the squared $\ell_2$ metric, aligning with our hypothesis.

We evaluate the effectiveness of Pseudo-Huber metrics by training several consistency models with varying $c$ values on CIFAR-10 and comparing their sample quality with models trained using LPIPS and squared $\ell_2$ metrics. We incorporate improved techniques from \cref{sec:improvements,sec:noema} for all metrics. \cref{fig:loss_search} reveals that Pseudo-Huber metrics yield notably better sample quality than the squared $\ell_2$ metric. By increasing the overall size of $N(k)$---adjusting $s_0$ and $s_1$ from the standard values of 2 and 150 in \citet{song2023consistency} to our new values of 10 and 1280 (more in \cref{sec:discretization})---we for the first time surpass the performance of CT with LPIPS on equal footing using a traditional metric function that does not rely on learned feature representations. Furthermore, \cref{fig:loss_search_1280} indicates that $c=0.03$ is optimal for CIFAR-10 images. We suggest that $c$ should scale linearly with $\norm{\vx - \vy}_2$, and propose a heuristic of $c = 0.00054 \sqrt{d}$ for images with $d$ dimensions. Empirically, we find this recommendation to work well on both CIFAR-10 and ImageNet $64\times 64$ datasets.

\subsection{Improved curriculum for total discretization steps}\label{sec:discretization}

As mentioned in \cref{sec:noema}, CT's theoretical foundation holds asymptotically as $N \to \infty$. In practice, we have to select a finite $N$ for training consistency models, potentially introducing bias into the learning process. To understand the influence of $N$ on sample quality, we train a consistency model with improved techniques from \cref{sec:improvements,sec:noema,sec:pseudo_huber}. Unlike \citet{song2023consistency}, we use an exponentially increasing curriculum for the total discretization steps $N$, doubling $N$ after a set number of training iterations. Specifically, the curriculum is described by
\begin{align}
N(k) = \min( s_0 2^{\lfloor \frac{k}{K'} \rfloor}, s_1) + 1, \quad K' = \Big\lfloor \frac{K}{\log_2 \lfloor s_1 / s_0 \rfloor + 1} \Big\rfloor,
\label{eq:exponential}
\end{align}
and its shape is labelled ``Exp'' in \cref{fig:curriculum_shape}.

As revealed in \cref{fig:scaling_law}, the sample quality of consistency models improves predictably as $N$ increases. Importantly, FID scores relative to $N$ adhere to a precise power law until reaching saturation, after which further increases in $N$ yield diminishing benefits. As noted by \citet{song2023consistency}, while larger $N$ can reduce bias in CT, they might increase variance. On the contrary, smaller $N$ reduces variance at the cost of higher bias. Based on \cref{fig:scaling_law}, we cap $N$ at 1281 in $N(k)$, which we empirically find to strike a good balance between bias and variance. In our experiments, we set $s_0$ and $s_1$ in discretization curriculums from their default values of 2 and 150 in \citet{song2023consistency} to 10 and 1280 respectively.

Aside from the exponential curriculum above, we also explore various shapes for $N(k)$ with the same $s_0=10$ and $s_1=1280$, including a constant function, the square root function from \citet{song2023consistency}, a linear function, a square function, and a cosine function. The shapes of various curriculums are illustrated in \cref{fig:curriculum_shape}. As \cref{fig:curriculums} demonstrates, the exponential curriculum yields the best sample quality for consistency models. Consequently, we adopt the exponential curriculum in \cref{eq:exponential} as our standard for setting $N(k)$ going forward.

\begin{figure}
    \centering
    \begin{subfigure}[b]{0.345\textwidth}
        \centering
        \includegraphics[width=\linewidth]{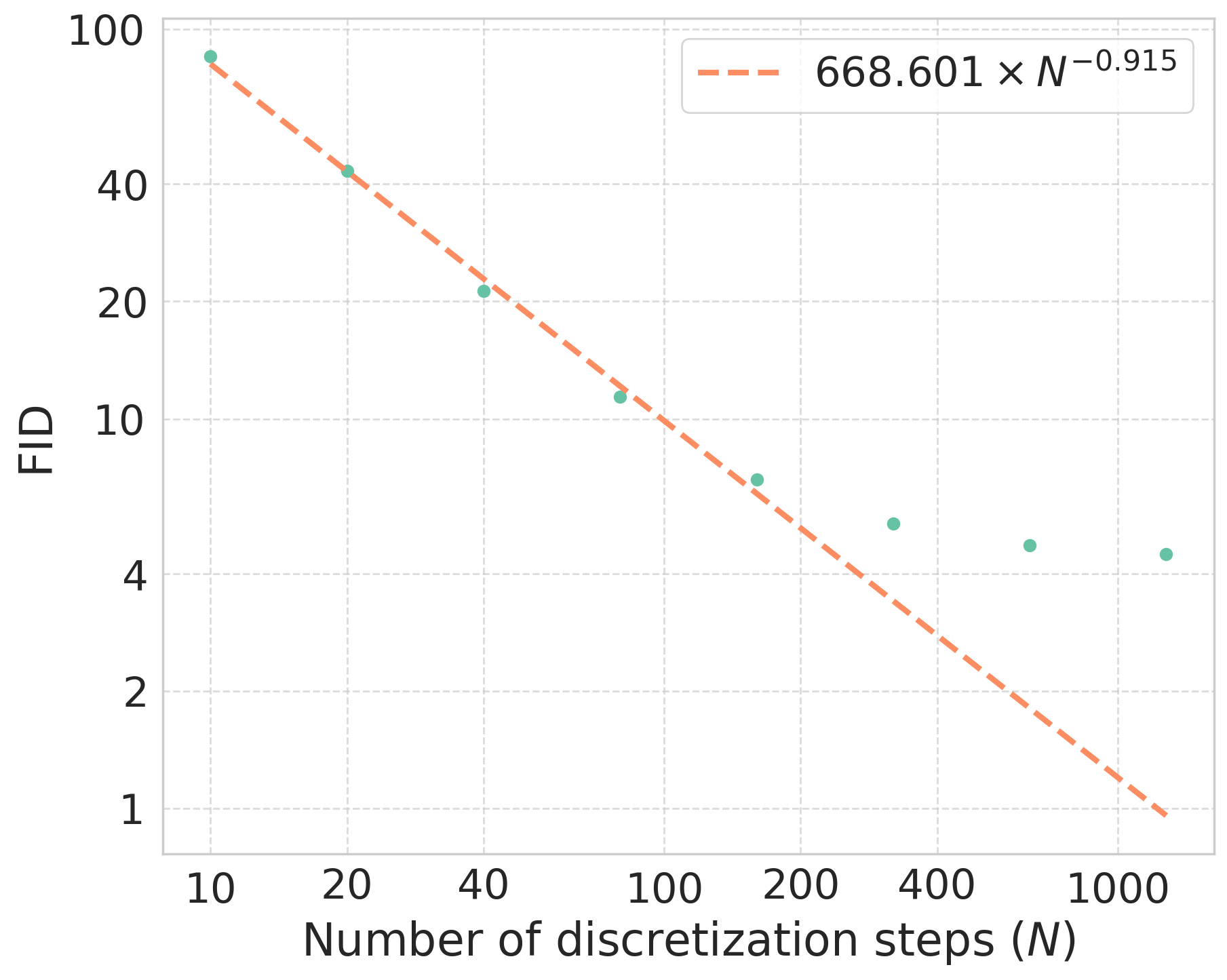}
        \caption{FID scores vs. $N$}\label{fig:scaling_law}
    \end{subfigure}%
    \begin{subfigure}[b]{0.337\textwidth}
        \centering
        \includegraphics[width=\linewidth]{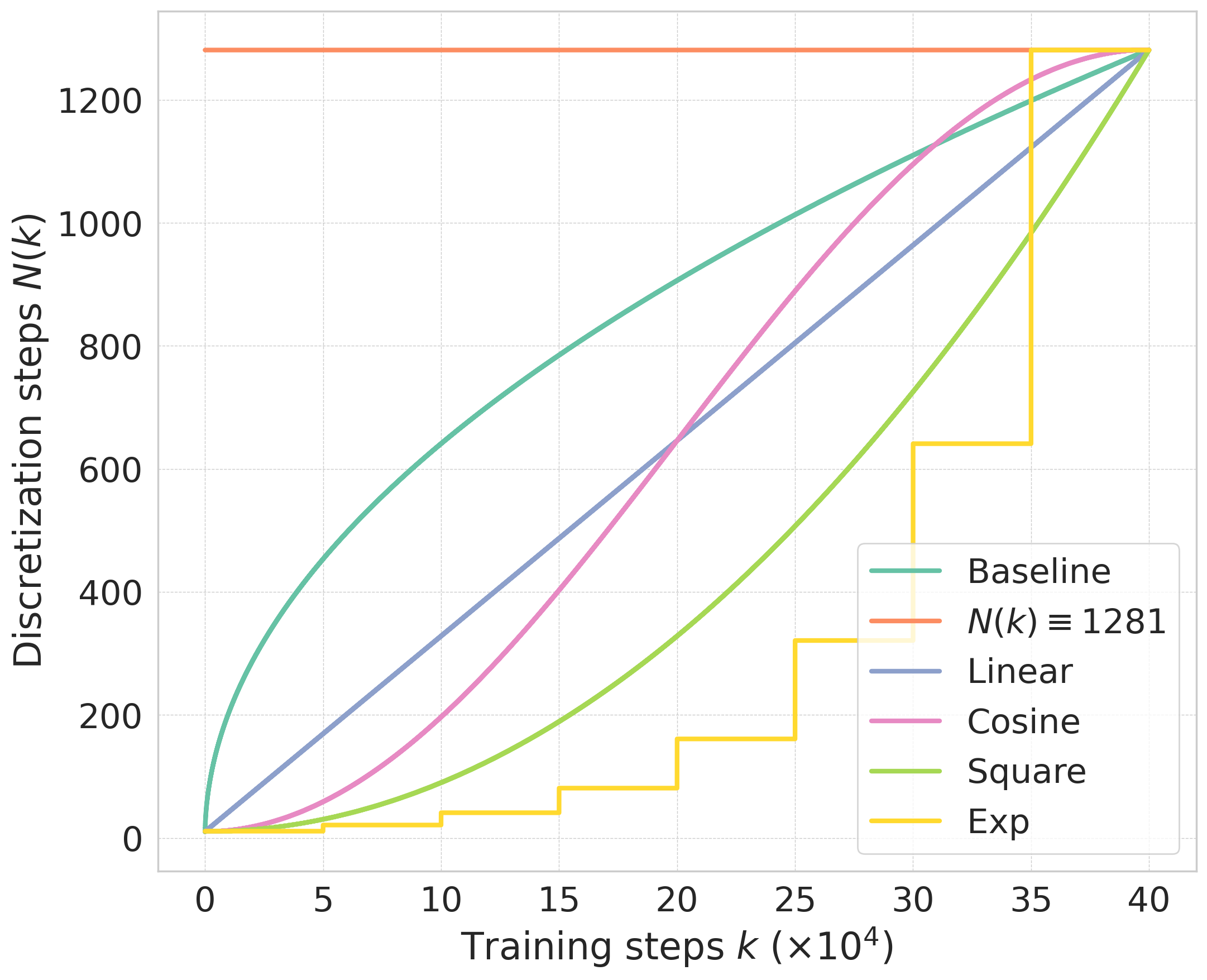}
        \caption{Various curriculums for $N(k)$.}\label{fig:curriculum_shape}
    \end{subfigure}%
    \begin{subfigure}[b]{0.33\textwidth}
        \centering
        \includegraphics[width=\linewidth]{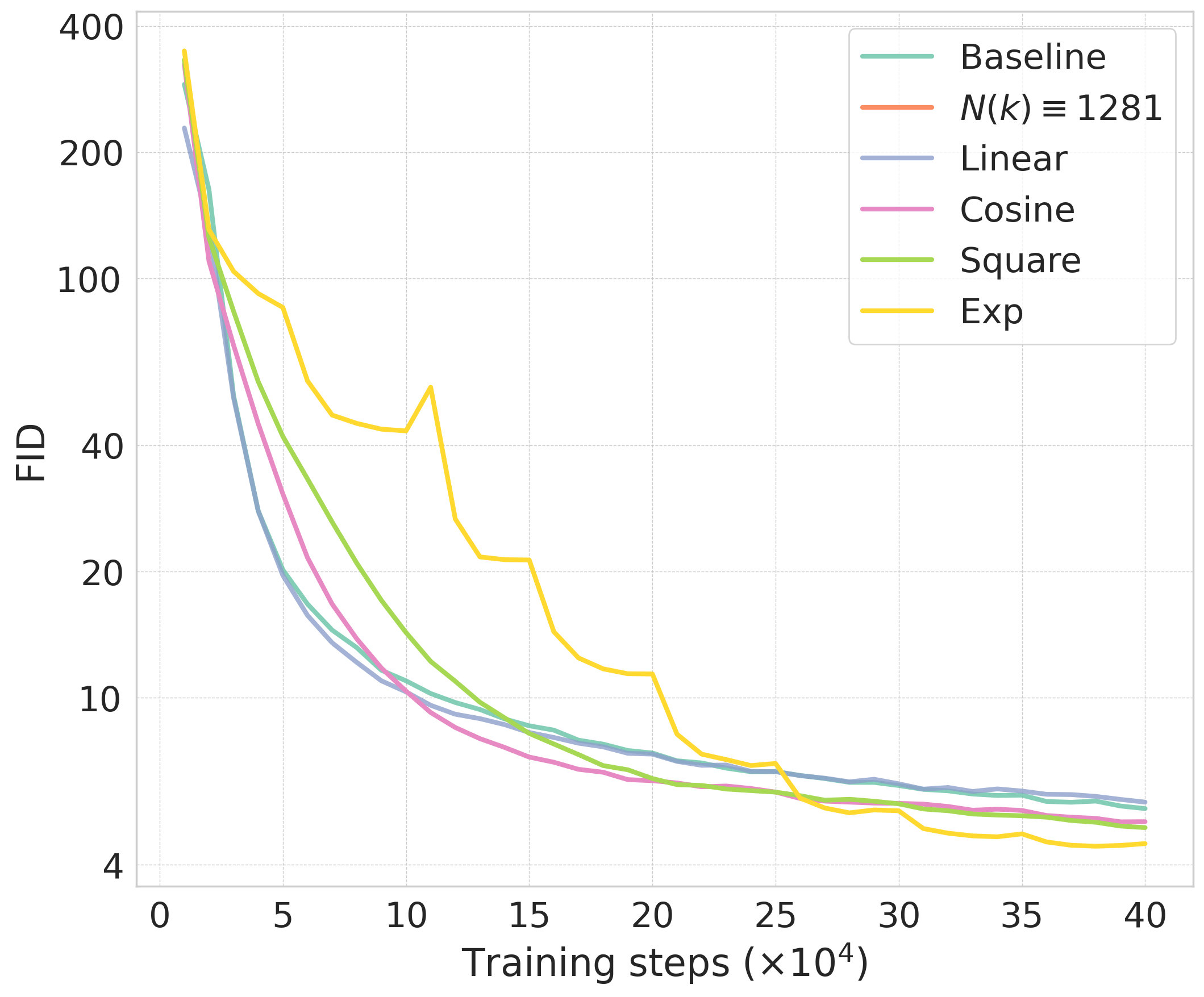}
        \caption{FIDs vs. $N(k)$ curriculums.}\label{fig:curriculums}
    \end{subfigure}
    \caption{(a) FID scores improve predictably as the number of discretization steps $N$ grows. (b) The shapes of various curriculums for total discretization steps $N(k)$. (c) The FID curves of various curriculums for discretization. All models are trained with improved techniques from \cref{sec:improvements,sec:noema,sec:pseudo_huber} with the only difference in discretization curriculums.}\label{fig:discretization}
\end{figure}

\subsection{Improved noise schedules}\label{sec:noise}

\citet{song2023consistency} propose to sample a random $i$ from $\mcal{U}\llbracket 1, N-1 \rrbracket$ and select $\sigma_i$ and $\sigma_{i+1}$ to compute the CT objective. Given that $\sigma_i = (\sigma_\text{min}^{1/\rho} + \frac{i-1}{N-1}(\sigma_\text{max}^{1/\rho} - \sigma_\text{min}^{1/\rho}))^\rho$, this corresponds to sampling from the distribution $p(\log \sigma) = \sigma \frac{\sigma^{1/\rho - 1}}{\rho (\sigma_\text{max}^{1/\rho} - \sigma_\text{min}^{1/\rho})}$ as $N \to \infty$. As shown in \cref{fig:pdf_logsigma}, this distribution exhibits a higher probability density for larger values of $\log \sigma$. This is at odds with the intuition that consistency losses at lower noise levels influence subsequent ones and cause error accumulation, so losses at lower noise levels should be given greater emphasis. Inspired by \citet{Karras2022edm}, we address this by adopting a lognormal distribution to sample noise levels, setting a mean of -1.1 and a standard deviation of 2.0. As illustrated in \cref{fig:pdf_logsigma}, this lognormal distribution assigns significantly less weight to high noise levels. Moreover, it also moderates the emphasis on smaller noise levels. This is helpful because learning is easier at smaller noise levels due to the inductive bias in our parameterization of the consistency model to meet the boundary condition.

\begin{figure}
    \centering
    \begin{subfigure}[b]{0.345\textwidth}
        \centering
        \includegraphics[width=\linewidth]{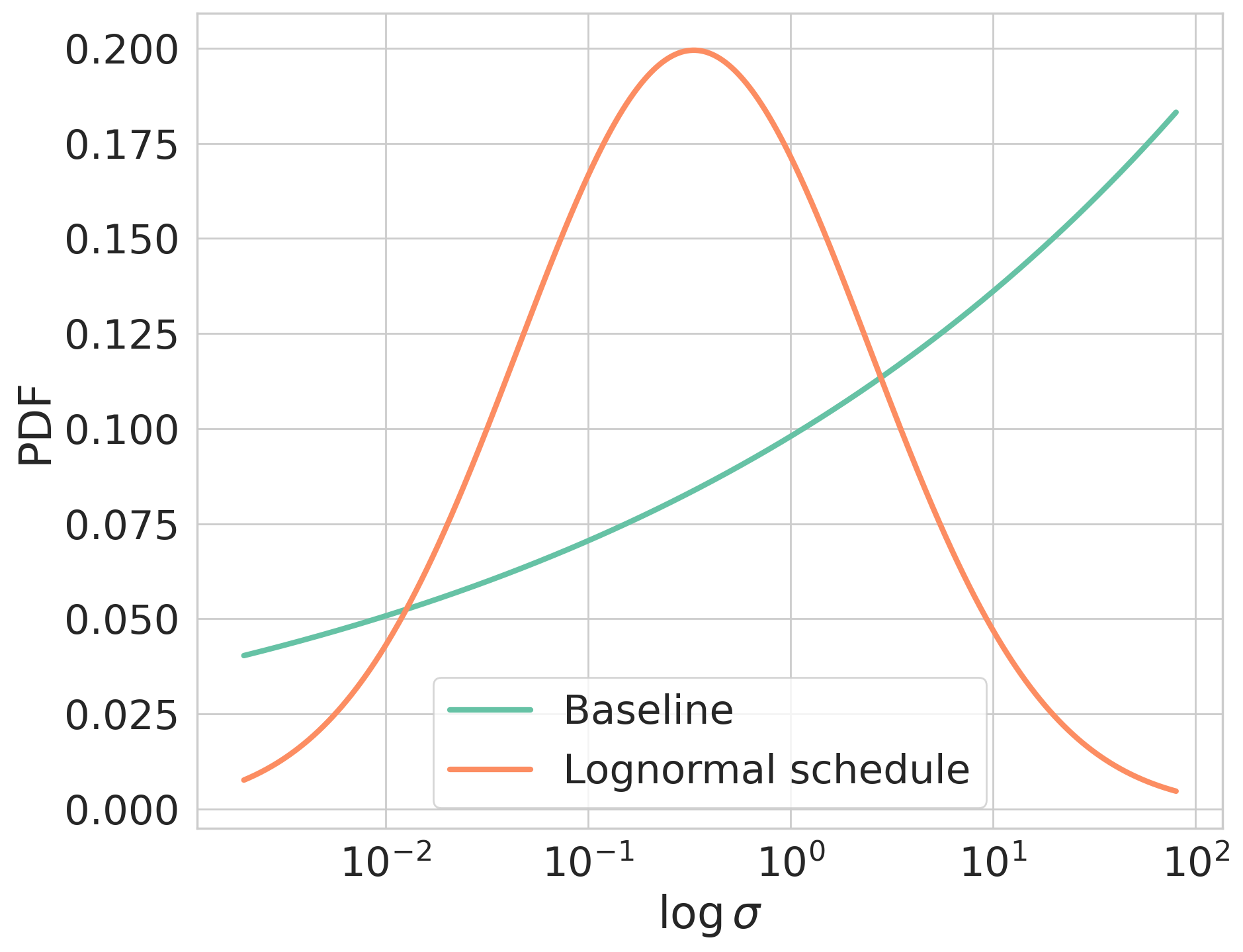}
        \caption{PDF of $\log \sigma$}\label{fig:pdf_logsigma}
    \end{subfigure}
    \begin{subfigure}[b]{0.33\textwidth}
        \centering
        \includegraphics[width=\linewidth]{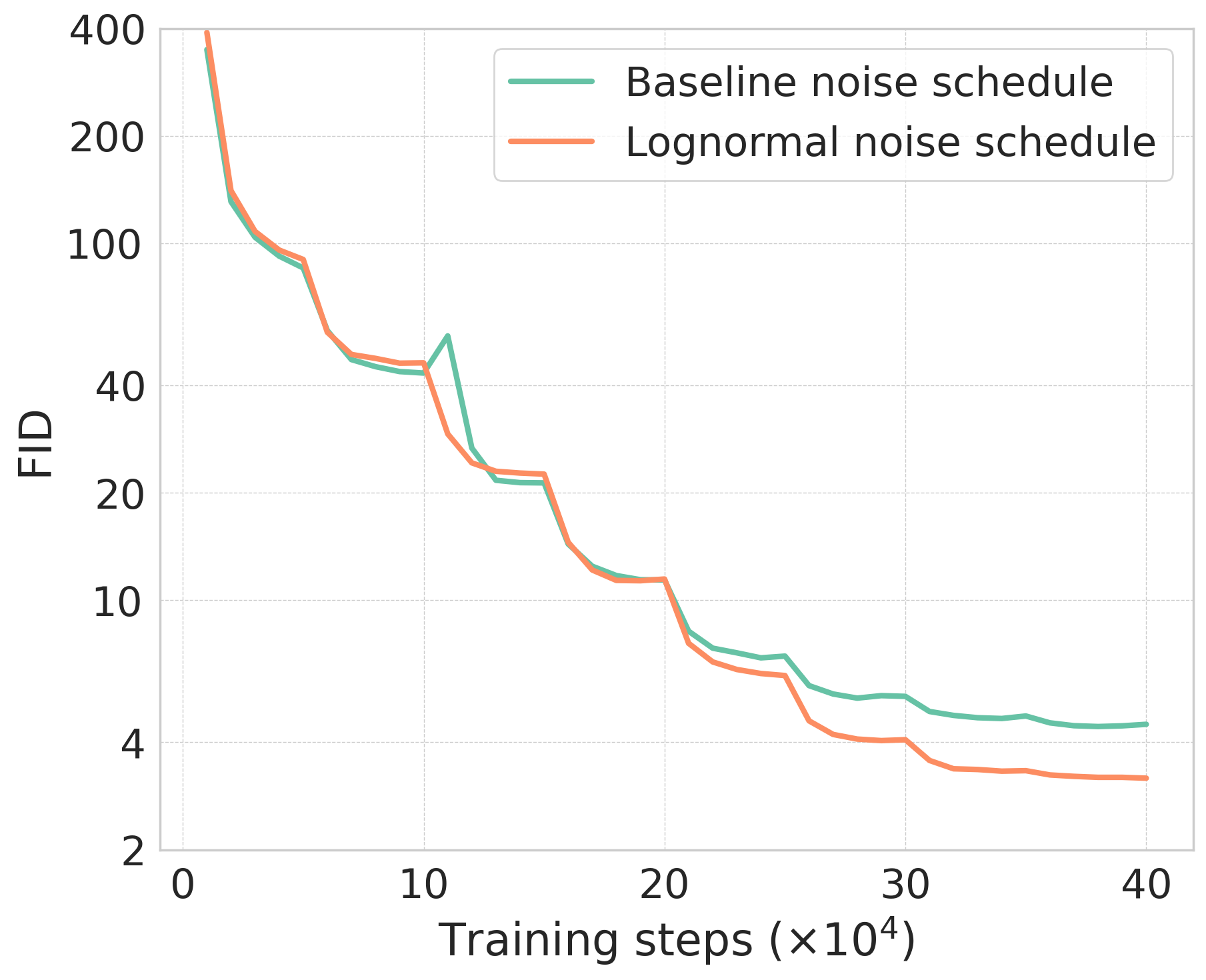}
        \caption{Lognormal vs. default schedules.}\label{fig:lognormal}
    \end{subfigure}
    \caption{The PDF of $\log \sigma$ indicates that the default noise schedule in \citet{song2023consistency} assigns more weight to larger values of $\log \sigma$, corrected by our lognormal schedule. We compare the FID scores of CT using both the lognormal noise schedule and the original one, where both models incorporate the improved techniques in \cref{sec:improvements,sec:noema,sec:pseudo_huber,sec:discretization}.}\label{fig:lognormal_schedule}
\end{figure}

For practical implementation, we sample noise levels in the set $\{\sigma_1, \sigma_2, \cdots, \sigma_N\}$ according to a discretized lognormal distribution defined as
\begin{align}
p(\sigma_i) \propto \operatorname{erf}\bigg(\frac{\log(\sigma_{i+1}) - P_\text{mean}}{\sqrt{2}P_\text{std}}\bigg) - \operatorname{erf}\bigg(\frac{\log(\sigma_{i}) - P_\text{mean}}{\sqrt{2}P_\text{std}}\bigg),
\end{align}
where $P_\text{mean} = -1.1$ and $P_\text{std} = 2.0$. As depicted in \cref{fig:lognormal}, this lognormal noise schedule significantly improves the sample quality of consistency models.

\section{Putting it together}
\begin{table*}
    \begin{minipage}[t]{0.49\linewidth}
	\caption{Comparing the quality of unconditional samples on CIFAR-10.}\label{tab:cifar-10}
	\centering
	{\setlength{\extrarowheight}{1.5pt}
	\begin{adjustbox}{max width=\linewidth}
	\begin{tabular}{@{}l@{\hspace{-0.2em}}c@{\hspace{0.3em}}c@{\hspace{0.3em}}c@{}}
        \Xhline{3\arrayrulewidth}
	    METHOD & NFE ($\downarrow$) & FID ($\downarrow$) & IS ($\uparrow$) \\
        \\[-2ex]
        \multicolumn{4}{@{}l}{\textbf{Fast samplers \& distillation for diffusion models}}\\\Xhline{3\arrayrulewidth}
        DDIM \citep{song2020denoising} & 10 & 13.36 &\\
        DPM-solver-fast \citep{lu2022dpm} & 10 & 4.70 & \\
        3-DEIS \citep{zhang2022fast} & 10 & 4.17 & \\
        UniPC \citep{zhao2023unipc} & 10 & 3.87 & \\
        Knowledge Distillation \citep{luhman2021knowledge} & 1 & 9.36 &  \\
        DFNO (LPIPS) \citep{zheng2022fast} & 1 & 3.78 & \\
        2-Rectified Flow (+distill) \citep{liu2022flow}
         & 1 & 4.85 & 9.01\\
        TRACT \citep{berthelot2023tract} & 1 & 3.78 & \\
         & 2 & 3.32 & \\
        Diff-Instruct \citep{luo2023diff} & 1 & 4.53 & 9.89\\
        PD$^*$ \citep{salimans2022progressive} & 1 & 8.34 & 8.69 \\
          & 2 & 5.58 & 9.05 \\
        CD (LPIPS) \citep{song2023consistency} & 1 & 3.55 & 9.48 \\
          & 2 & 2.93 & 9.75 \\
        \multicolumn{4}{@{}l}{\textbf{Direct Generation}}\\\Xhline{3\arrayrulewidth}
        Score SDE \citep{song2021scorebased} & 2000 & 2.38 & 9.83\\
        Score SDE (deep) \citep{song2021scorebased} & 2000 & 2.20 & 9.89\\
        DDPM \citep{ho2020denoising} & 1000 & 3.17 & 9.46\\
        LSGM \citep{vahdat2021score} & 147 & 2.10 &\\
        PFGM \citep{xu2022poisson} & 110 & 2.35 & 9.68 \\
        EDM$^*$ \citep{Karras2022edm}
         & 35 & 2.04 & 9.84 \\
        EDM-G++ \citep{kim2023refining} & 35 & 1.77 & \\
        IGEBM \citep{duimplicit2019} & 60 & 40.6 & 6.02 \\
        NVAE \citep{vahdat2020nvae} & 1 & 23.5 & 7.18\\
        Glow \citep{kingma2018glow}
         & 1 & 48.9 & 3.92 \\
        Residual Flow \citep{chen2019residual} & 1 & 46.4\\
        BigGAN \citep{brock2018large} & 1 & 14.7 & 9.22\\
        StyleGAN2 \citep{karras2020analyzing}
         & 1 & 8.32 & 9.21\\
        StyleGAN2-ADA \citep{karras2020training}
         & 1 & 2.92 & 9.83\\
        CT (LPIPS) \citep{song2023consistency} & 1 & 8.70 & 8.49 \\
          & 2 & 5.83 & 8.85 \\
        \textbf{iCT (ours)} & 1 & 2.83 & 9.54 \\
        & 2 & 2.46 & 9.80 \\
        \textbf{iCT-deep (ours)} & 1 & 2.51 & 9.76 \\
        & 2 & 2.24 & 9.89
	\end{tabular}
    \end{adjustbox}
	}
\end{minipage}
\hfill
\begin{minipage}[t]{0.49\linewidth}
    \caption{Comparing the quality of class-conditional samples on ImageNet $64\times 64$.}\label{tab:imagenet-64}
    \centering
    {\setlength{\extrarowheight}{1.5pt}
    \begin{adjustbox}{max width=\linewidth}
    \begin{tabular}{@{}l@{\hspace{0.2em}}c@{\hspace{0.3em}}c@{\hspace{0.3em}}c@{\hspace{0.3em}}c@{}}
        \Xhline{3\arrayrulewidth}
        METHOD & NFE ($\downarrow$) & FID ($\downarrow$) & Prec. ($\uparrow$) & Rec. ($\uparrow$) \\
        \\[-2ex]
        \multicolumn{4}{@{}l}{\textbf{Fast samplers \& distillation for diffusion models}}\\\Xhline{3\arrayrulewidth}
        DDIM \citep{song2020denoising} & 50 & 13.7 & 0.65 & 0.56\\
        & 10 & 18.3 & 0.60 & 0.49\\
        DPM solver \citep{lu2022dpm} & 10 & 7.93 &&\\
        & 20 & 3.42 &&\\
        DEIS \citep{zhang2022fast} & 10 & 6.65 & \\
        & 20 & 3.10 && \\
        DFNO (LPIPS) \citep{zheng2022fast} & 1 & 7.83 & & 0.61\\
        TRACT \citep{berthelot2023tract} & 1 & 7.43 & & \\
            & 2 & 4.97 & & \\
        BOOT \citep{gu2023boot} & 1 & 16.3 & 0.68 & 0.36\\
        Diff-Instruct \citep{luo2023diff} & 1 & 5.57 & &\\
        PD$^*$ \citep{salimans2022progressive} & 1 & 15.39 & 0.59 & 0.62 \\
            & 2 & 8.95 & 0.63 & 0.65 \\
            & 4 & 6.77 & 0.66 & 0.65 \\
        PD (LPIPS) \citep{song2023consistency} & 1 & 7.88 & 0.66 & 0.63 \\
            & 2 & 5.74 & 0.67 & 0.65 \\
            & 4 & 4.92 & 0.68 & 0.65 \\
        CD (LPIPS) \citep{song2023consistency} & 1 & 6.20 & 0.68 & 0.63 \\
            & 2 & 4.70 & 0.69 & 0.64 \\
            & 3 & 4.32 & 0.70 & 0.64 \\
        \multicolumn{4}{@{}l}{\textbf{Direct Generation}}\\\Xhline{3\arrayrulewidth}
        RIN \citep{jabri2023rin} & 1000 & 1.23 & & \\
        DDPM \citep{ho2020denoising} & 250 & 11.0 & 0.67 & 0.58 \\
        iDDPM \citep{nichol2021improved} & 250 & 2.92 & 0.74 & 0.62\\
        ADM \citep{dhariwal2021diffusion} & 250 & 2.07 & 0.74 & 0.63\\
        EDM \citep{Karras2022edm}
            & 511 & 1.36 & & \\
        EDM$^*$ (Heun) \citep{Karras2022edm} & 79 & 2.44 & 0.71 & 0.67\\
        BigGAN-deep \citep{brock2018large} & 1 & 4.06 & 0.79 & 0.48\\
        CT (LPIPS) \citep{song2023consistency} & 1 & 13.0 & 0.71 & 0.47 \\
            & 2 & 11.1 & 0.69 & 0.56\\
        \textbf{iCT (ours)} & 1 & 4.02 & 0.70 & 0.63\\
        & 2 & 3.20 & 0.73 & 0.63 \\
        \textbf{iCT-deep (ours)} & 1 & 3.25 & 0.72 & 0.63 \\
        & 2 & 2.77 & 0.74 & 0.62
    \end{tabular}
    \end{adjustbox}
    }
\end{minipage}
\vspace{-1em}
\captionsetup{labelformat=empty, labelsep=none, font=scriptsize}
\caption{Most results for existing methods are taken from a previous paper, except for those marked with *, which are from our own re-implementation.}
\end{table*}

\begin{figure}
    \centering
    \begin{minipage}[t]{0.33\textwidth}
        \vspace{-24.5em}
        \begin{subfigure}[b]{\linewidth}
            \centering
            \includegraphics[width=\linewidth]{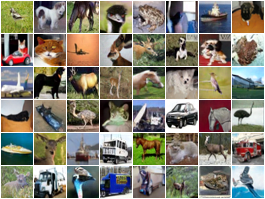}
            \caption{One-step samples on CIFAR-10.}
        \end{subfigure}\\
        \begin{subfigure}[b]{\linewidth}
            \centering
            \vspace{1.5em}
            \includegraphics[width=\linewidth]{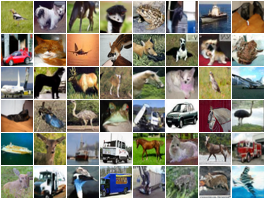}
            \caption{Two-step samples on CIFAR-10.}
        \end{subfigure}
    \end{minipage}\hfill
    \begin{subfigure}[b]{0.33\textwidth}
        \centering
        \includegraphics[width=\linewidth]{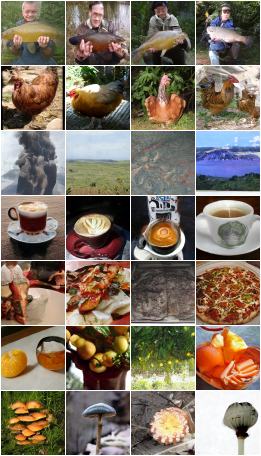}
        \caption{One-step samples on ImageNet.}
    \end{subfigure}\hfill
    \begin{subfigure}[b]{0.33\textwidth}
        \centering
        \includegraphics[width=\linewidth]{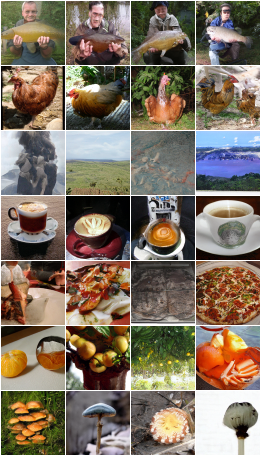}
        \caption{Two-step samples on ImageNet.}
    \end{subfigure}
    \caption{One-step and two-step samples from iCT-deep models trained on CIFAR-10 and ImageNet $64\times 64$ respectively. All corresponding samples are generated from the same initial noise vector.}\label{fig:samples}
\end{figure}

Combining all the improved techniques from \cref{sec:improvements,sec:noema,sec:pseudo_huber,sec:discretization,sec:noise}, we employ CT to train several consistency models on CIFAR-10 and ImageNet $64\times 64$ and benchmark their performance with competing methods in the literature. We evaluate sample quality using FID \citep{heusel2017gans}, Inception score \citep{SalimansGZCRCC16}, and Precision/Recall \citep{kynkaanniemi2019improved}. For best performance, we use a larger batch size and an increased EMA decay rate for the student network in CT across all models. The model architectures are based on Score SDE \citep{song2021scorebased} for CIFAR-10 and ADM \citep{dhariwal2021diffusion} for ImageNet $64\times 64$. We also explore deeper variants of these architectures by doubling the model depth. We call our method \textbf{iCT} which stands for ``improved consistency training'', and the deeper variants \textbf{iCT-deep}. We summarize our results in \cref{tab:cifar-10,tab:imagenet-64} and provide uncurated samples from iCT-deep in \cref{fig:samples}. More details and results can be found in \cref{app:exp}.

We summarize our results and compare them to previous methods in \cref{tab:cifar-10,tab:imagenet-64}. Here we exclude methods based on FastGAN \citep{liu2020towards,sauer2021projected} or StyleGAN-XL \citep{sauer2022stylegan} from our consideration, because both utilize ImageNet pre-trained feature extractors in their discriminators. As noted by \citet{kynkanniemi2023the}, this can skew FIDs and lead to inflated sample quality.

Several key observations emerge from \cref{tab:cifar-10,tab:imagenet-64}. First, iCT methods \emph{surpass previous diffusion distillation approaches in both one-step and two-step generation} on CIFAR-10 and ImageNet $64\times 64$, all while circumventing the need for training diffusion models. Secondly, iCT models demonstrate sample quality comparable to many leading generative models, including diffusion models and GANs. For instance, with one-step generation, iCT-deep obtains FIDs of 2.51 and 3.25 for CIFAR-10 and ImageNet respectively, whereas DDPMs \citep{ho2020denoising} necessitate thousands of sampling steps to reach FIDs of 3.17 and 11.0 (result taken from \citet{gu2023boot}) on both datasets. The one-step FID for iCT already exceeds that of StyleGAN-ADA \citep{karras2020analyzing} on CIFAR-10, and that of BigGAN-deep \citep{brock2018large} on ImageNet $64\times 64$, let alone iCT-deep models. For two-step generation, iCT-deep records an FID of 2.24, matching Score SDE in \citet{song2021scorebased}, a diffusion model with an identical architecture but demands 2000 sampling steps for an FID of 2.20. Lastly, iCT methods show improved recall than CT (LPIPS) in \citet{song2023consistency} and BigGAN-deep, indicating better diversity and superior mode coverage.

\section{Conclusion}
Our improved techniques for CT have successfully addressed its previous limitations, surpassing the performance of CD in generating high-quality samples without relying on LPIPS. We examined the impact of weighting functions, noise embeddings, and dropout. By removing EMA for teacher networks, adopting Pseudo-Huber losses in lieu of LPIPS, combined with a new curriculum for discretization and noise sampling schedule, we have achieved unprecedented FID scores for consistency models on both CIFAR-10 and ImageNet $64\times 64$ datasets. Remarkably, these results outpace previous CT methods by a considerable margin, surpass previous few-step diffusion distillation techniques, and challenge the sample quality of leading diffusion models and GANs.

\section*{Acknowledgements}
We would like to thank Alex Nichol, Allan Jabri, Ishaan Gulrajani, and Jakub Pachocki for insightful technical discussions. We also appreciate Mark Chen and Ilya Sutskever for their unwavering support throughout this project.

\bibliography{main.bib}
\bibliographystyle{iclr2024_conference}

\appendix
\newpage
\section{Proofs}\label{app:proof}
\begin{customprop}{\ref{prop}}
    Given the notations introduced in \cref{sec:noema}, and using the uniform weighting function $\lambda(\sigma) = 1$ along with the squared $\ell_2$ metric, we have
    \begin{gather}
        \lim_{N \to \infty}\mcal{L}^N(\theta, \theta^-) = \lim_{N \to \infty}\mcal{L}^N_\text{CT}(\theta, \theta^-) = \mbb{E}\Big[ \big(1 - \frac{\sigma_\text{min}}{\sigma_i} \big)^2(\theta - \theta^{-})^2\Big] \quad \text{if $\theta^{-}\neq \theta$}\\
        \lim_{N \to \infty}\frac{1}{\Delta \sigma} \frac{\ud \mcal{L}^N(\theta, \theta^{-})}{\ud \theta} = \begin{cases}
            \frac{\ud}{\ud\theta} \mbb{E}\Big[ \frac{\sigma_\text{min}}{\sigma_i^2}\Big( 1 - \frac{\sigma_\text{min}}{\sigma_i} \Big)(\theta - \xi)^2 \Big], &\quad \theta^{-} = \theta\\
            +\infty, &\quad \theta^{-} < \theta \\
            -\infty, &\quad \theta^{-} > \theta \\
        \end{cases}
    \end{gather}
\end{customprop}
\begin{proof}
Since $\lambda(\sigma) \equiv 1$ and $d(x, y) = (x-y)^2$, we can write down the CM and CT objectives as
$\mcal{L}^{N}(\theta, \theta^{-}) = \mbb{E}[(f_\theta(x_{\sigma_{i+1}}, \sigma_{i+1}) - f_{\theta^-}(\breve{x}_{\sigma_i}, \sigma_i))^2]$ and $\mcal{L}^{N}_\text{CT}(\theta, \theta^{-}) = \mbb{E}[(f_\theta(x_{\sigma_{i+1}}, \sigma_{i+1}) - f_{\theta^-}(\check{x}_{\sigma_i}, \sigma_i))^2]$ respectively. Since $p_\text{data}(x) = \delta(x - \xi)$, we have $p_\sigma(x) = \mcal{N}(x\mid \xi, \sigma^2)$, and therefore $\nabla \log p_\sigma(x) = -\frac{x - \xi}{\sigma^2}$. According to the definition of $\breve{x}_{\sigma_i}$ and $x_{\sigma_{i+1}} = \xi + \sigma_{i+1}z$, we have
\begin{align*}
    \breve{x}_{\sigma_i} &= x_{\sigma_{i+1}} - (\sigma_i - \sigma_{i+1})\sigma_{i+1}\nabla \log p(x_{\sigma_{i+1}}, \sigma_{i+1})\\
    &=x_{\sigma_{i+1}} + (\sigma_i - \sigma_{i+1})\sigma_{i+1}\frac{x_{\sigma_{i+1}} - \xi}{\sigma_{i+1}^2}\\
    &=x_{\sigma_{i+1}} + (\sigma_i - \sigma_{i+1})z\\
    &=\xi + \sigma_{i+1}z + (\sigma_i - \sigma_{i+1})z\\
    &=\xi + \sigma_iz\\
    &=\check{x}_{\sigma_i}.
\end{align*}
As a result, the CM and CT objectives are exactly the same, that is, $\mcal{L}^N(\theta, \theta^-) = \mcal{L}^N_\text{CT}(\theta, \theta^-)$. Recall that the consistency model $f_\theta(x, \sigma)$ is defined as $f_\theta(x, \sigma) = \frac{\sigma_\text{min}}{\sigma} x + \left(1 - \frac{\sigma_\text{min}}{\sigma} \right)\theta$, so we have $f_\theta(x_{\sigma}, \sigma) = \sigma_\text{min} z + \frac{\sigma_\text{min}}{\sigma}\xi + \left(1 - \frac{\sigma_\text{min}}{\sigma}\right)\theta$.
Now, let us focus on the CM objective
\begin{align*}
    \mcal{L}^N(\theta, \theta^-) &= \mbb{E}[(f_\theta(x_{\sigma_{i+1}}, \sigma_{i+1}) - f_{\theta^-}(\breve{x}_{\sigma_i}, \sigma_i))^2]\\
    &=\mbb{E}[(f_\theta(x_{\sigma_{i+1}}, \sigma_{i+1}) - f_{\theta^-}(\check{x}_{\sigma_i}, \sigma_i))^2]\\
    &=\mbb{E}\bigg[\bigg(\frac{\sigma_\text{min}}{\sigma_{i+1}}\xi + \left(1 - \frac{\sigma_\text{min}}{\sigma_{i+1}}\right)\theta - \frac{\sigma_\text{min}}{\sigma_i}\xi - \left(1 - \frac{\sigma_\text{min}}{\sigma_i}\right)\theta^-\bigg)^2\bigg]\\
    &=\mbb{E}\bigg[\bigg(\frac{\sigma_\text{min}}{\sigma_i + \Delta \sigma}\xi + \left(1 - \frac{\sigma_\text{min}}{\sigma_i + \Delta \sigma}\right)\theta - \frac{\sigma_\text{min}}{\sigma_i}\xi - \left(1 - \frac{\sigma_\text{min}}{\sigma_i}\right)\theta^-\bigg)^2\bigg],
\end{align*}
where $\Delta \sigma = \frac{\sigma_\text{max} - \sigma_\text{min}}{N-1}$, because $\sigma_i = \sigma_\text{min} + \frac{i-1}{N-1}(\sigma_\text{max} - \sigma_\text{min})$. By taking the limit $N \to \infty$, we have $\Delta \sigma \to 0$, and therefore
\begin{align*}
    &\lim_{N\to\infty} \mcal{L}^N(\theta, \theta^-) \\
    =& \lim_{\Delta \sigma \to 0} \mbb{E}\bigg[\bigg(\frac{\sigma_\text{min}}{\sigma_i + \Delta \sigma}\xi + \left(1 - \frac{\sigma_\text{min}}{\sigma_i + \Delta \sigma}\right)\theta - \frac{\sigma_\text{min}}{\sigma_i}\xi - \left(1 - \frac{\sigma_\text{min}}{\sigma_i}\right)\theta^-\bigg)^2\bigg]\\
    =& \lim_{\Delta \sigma \to 0} \mbb{E}\bigg[\bigg(\frac{\sigma_\text{min}}{\sigma_i}\left(1 - \frac{\Delta \sigma}{\sigma_i}\right)\xi + \left(1 - \frac{\sigma_\text{min}}{\sigma_i + \Delta \sigma}\right)\theta - \frac{\sigma_\text{min}}{\sigma_i}\xi - \left(1 - \frac{\sigma_\text{min}}{\sigma_i}\right)\theta^-\bigg)^2\bigg] + o(\Delta \sigma)\\
    =&\lim_{\Delta \sigma \to 0} \mbb{E}\bigg[\bigg(-\frac{\sigma_\text{min}\Delta \sigma}{\sigma_i^2}\xi + \left(1 - \frac{\sigma_\text{min}}{\sigma_i + \Delta \sigma}\right)\theta - \left(1 - \frac{\sigma_\text{min}}{\sigma_i}\right)\theta^-\bigg)^2\bigg] + o(\Delta \sigma)\\
    =&\lim_{\Delta \sigma \to 0} \mbb{E}\bigg[\bigg(-\frac{\sigma_\text{min}\Delta \sigma}{\sigma_i^2}\xi + \left(1 - \frac{\sigma_\text{min}}{\sigma_i}\left(1 - \frac{\Delta \sigma}{\sigma_i}\right)\right)\theta - \left(1 - \frac{\sigma_\text{min}}{\sigma_i}\right)\theta^-\bigg)^2\bigg] + o(\Delta \sigma).
\end{align*}
Suppose $\theta^{-}\neq \theta$, we have
\begin{align*}
    &\lim_{N\to\infty} \mcal{L}^N(\theta, \theta^-) \\
    =&\lim_{\Delta \sigma \to 0} \mbb{E}\bigg[\bigg(-\frac{\sigma_\text{min}\Delta \sigma}{\sigma_i^2}\xi + \left(1 - \frac{\sigma_\text{min}}{\sigma_i}\left(1 - \frac{\Delta \sigma}{\sigma_i}\right)\right)\theta - \left(1 - \frac{\sigma_\text{min}}{\sigma_i}\right)\theta^-\bigg)^2\bigg] + o(\Delta \sigma)\\
    =&\lim_{\Delta \sigma \to 0} \mbb{E}\Big[ \big(1 - \frac{\sigma_\text{min}}{\sigma_i} \big)^2(\theta - \theta^{-})^2\Big]  + o(\Delta \sigma)\\
    =&\mbb{E}\Big[ \big(1 - \frac{\sigma_\text{min}}{\sigma_i} \big)^2(\theta - \theta^{-})^2\Big],
\end{align*}
which proves our first statement in the proposition.

Now, let's consider $\nabla_\theta \mcal{L}^N(\theta, \theta^-)$. It has the following form
\begin{align*}
    \nabla_\theta \mcal{L}^N(\theta, \theta^{-}) = 2\mbb{E}\bigg[\bigg(\frac{\sigma_\text{min}}{\sigma_{i+1}}\xi + \left(1 - \frac{\sigma_\text{min}}{\sigma_{i+1}}\right)\theta - \frac{\sigma_\text{min}}{\sigma_i}\xi - \left(1 - \frac{\sigma_\text{min}}{\sigma_i}\right)\theta^-\bigg)\left(1 - \frac{\sigma_\text{min}}{\sigma_{i+1}}\right)\bigg].
\end{align*}
As $N\to\infty$ and $\Delta \sigma \to 0$, we have
\begin{align}
    &\lim_{N\to\infty}\nabla_\theta \mcal{L}^N(\theta, \theta^{-}) \notag\\
    =& \lim_{\Delta \sigma \to 0} 2\mbb{E}\bigg[-\frac{\sigma_\text{min}\Delta \sigma}{\sigma_i^2}\xi + \left(1 - \frac{\sigma_\text{min}}{\sigma_i}\left(1 - \frac{\Delta \sigma}{\sigma_i}\right)\right)\theta - \left(1 - \frac{\sigma_\text{min}}{\sigma_i}\right)\theta^-\bigg]\left(1 - \frac{\sigma_\text{min}}{\sigma_{i}}\right)\notag\\
    =& \begin{cases}
        \lim_{\Delta \sigma \to 0} 2\mbb{E}\Big[-\frac{\sigma_\text{min}\Delta \sigma}{\sigma_i^2}\xi + \frac{\sigma_\text{min}\Delta \sigma}{\sigma_i^2}\theta \Big]\left(1 - \frac{\sigma_\text{min}}{\sigma_{i}}\right),\quad & \theta^- = \theta\\
        2\mbb{E}\Big[ \big(1 - \frac{\sigma_\text{min}}{\sigma} \big)^2(\theta - \theta^{-})\Big], \quad &\theta^- \neq \theta
    \end{cases}\notag\\
    =& \begin{cases}
        \lim_{\Delta \sigma \to 0} 2\mbb{E}\Big[\frac{\sigma_\text{min}\Delta \sigma}{\sigma_i^2}(\theta - \xi) \Big]\left(1 - \frac{\sigma_\text{min}}{\sigma_{i}}\right),\quad & \theta^- = \theta\\
        2\mbb{E}\Big[ \big(1 - \frac{\sigma_\text{min}}{\sigma_i} \big)^2(\theta - \theta^{-})\Big], \quad &\theta^- \neq \theta.\label{eq:key}
    \end{cases}
\end{align}
Now it becomes obvious from \cref{eq:key} that when $\theta^- = \theta$, we have
\begin{align*}
    \lim_{N\to\infty}\frac{1}{\Delta \sigma} \nabla_\theta \mcal{L}^N(\theta, \theta^{-}) &= \lim_{\Delta \sigma \to 0} 2\mbb{E}\Big[\frac{\sigma_\text{min}}{\sigma_i^2}(\theta - \xi) \Big]\left(1 - \frac{\sigma_\text{min}}{\sigma_{i}}\right)\\
    &=  2\mbb{E}\Big[\frac{\sigma_\text{min}}{\sigma_i^2}(\theta - \xi) \Big]\left(1 - \frac{\sigma_\text{min}}{\sigma_{i}}\right)\\
    &=\frac{\ud}{\ud\theta} \mbb{E}\Big[ \frac{\sigma_\text{min}}{\sigma_i^2}\Big( 1 - \frac{\sigma_\text{min}}{\sigma_i} \Big)(\theta - \xi)^2 \Big].
\end{align*}
Moreover, we can deduce from \cref{eq:key} that
\begin{align*}
    \lim_{N\to\infty}\frac{1}{\Delta \sigma} \nabla_\theta \mcal{L}^N(\theta, \theta^{-}) = \begin{cases}
        +\infty, \quad \theta > \theta^-\\
        -\infty, \quad \theta < \theta^-
    \end{cases},
\end{align*}
which concludes the proof.
\end{proof}

\section{Additional experimental details and results}\label{app:exp}

\paragraph{Model architecture} Unless otherwise noted, we use the NCSN++ architecture \citep{song2021scorebased} on CIFAR-10, and the ADM architecture \citep{dhariwal2021diffusion} on ImageNet $64\times 64$. For iCT-deep models in \cref{tab:cifar-10,tab:imagenet-64}, we double the depth of base architectures by increasing the number of residual blocks per resolution from 4 and 3 to 8 and 6 for CIFAR-10 and ImageNet $64\times 64$ respectively. We use a dropout rate of 0.3 for all consistency models on CIFAR-10. For ImageNet $64\times 64$, we use a dropout rate of 0.2, but only apply them to convolutional layers whose the feature map resolution is smaller or equal to $16\times 16$, following the configuration in \citet{hoogeboom2023simple}. We also found that AdaGN introduced in \citet{dhariwal2021diffusion} hurts consistency training and opt to remove it for our ImageNet $64\times 64$ experiments. All models on CIFAR-10 are unconditional, and all models on ImageNet $64\times 64$ are conditioned on class labels.

\paragraph{Training} We train all models with the RAdam optimizer \citep{liu2019variance} using learning rate 0.0001. All CIFAR-10 models are trained for 400,000 iterations, whereas ImageNet $64\times 64$ models are trained for 800,000 iterations. For CIFAR-10 models in \cref{sec:method}, we use batch size 512 and EMA decay rate 0.9999 for the student network. For iCT and iCT-deep models in \cref{tab:cifar-10}, we use batch size 1024 and EMA decay rate of 0.99993 for CIFAR-10 models, and batch size 4096 and EMA decay rate 0.99997 for ImageNet $64\times 64$ models. All models are trained on a cluster of Nvidia A100 GPUs.

\begin{figure}
    \centering
    \begin{subfigure}[b]{0.33\textwidth}
        \centering
        \includegraphics[width=\linewidth]{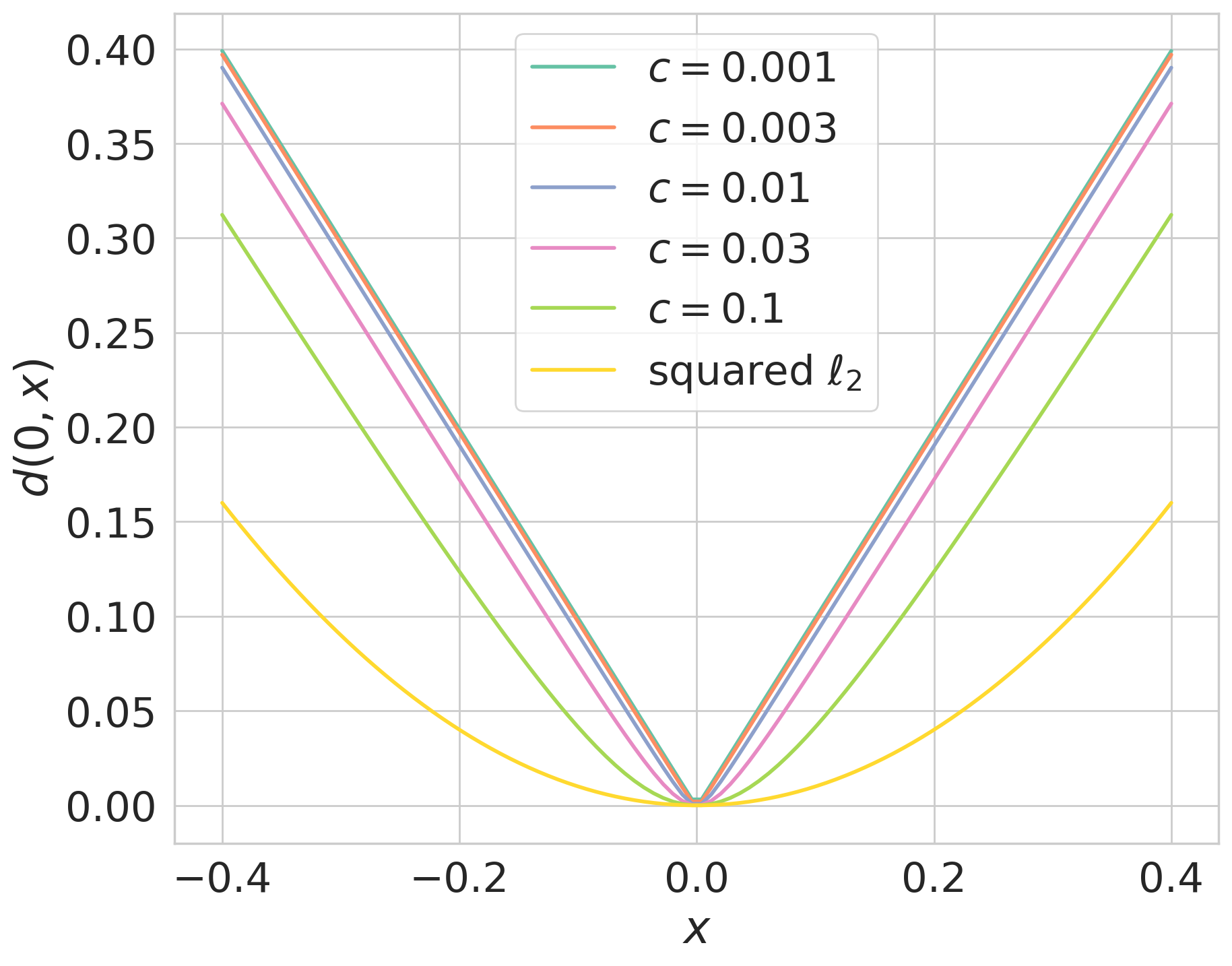}
        \caption{$d(\bm{0}, \vx)$ as a function of $\vx$.}\label{fig:pseudo_huber_shape}
    \end{subfigure}
    \begin{subfigure}[b]{0.33\textwidth}
        \centering
        \includegraphics[width=\linewidth]{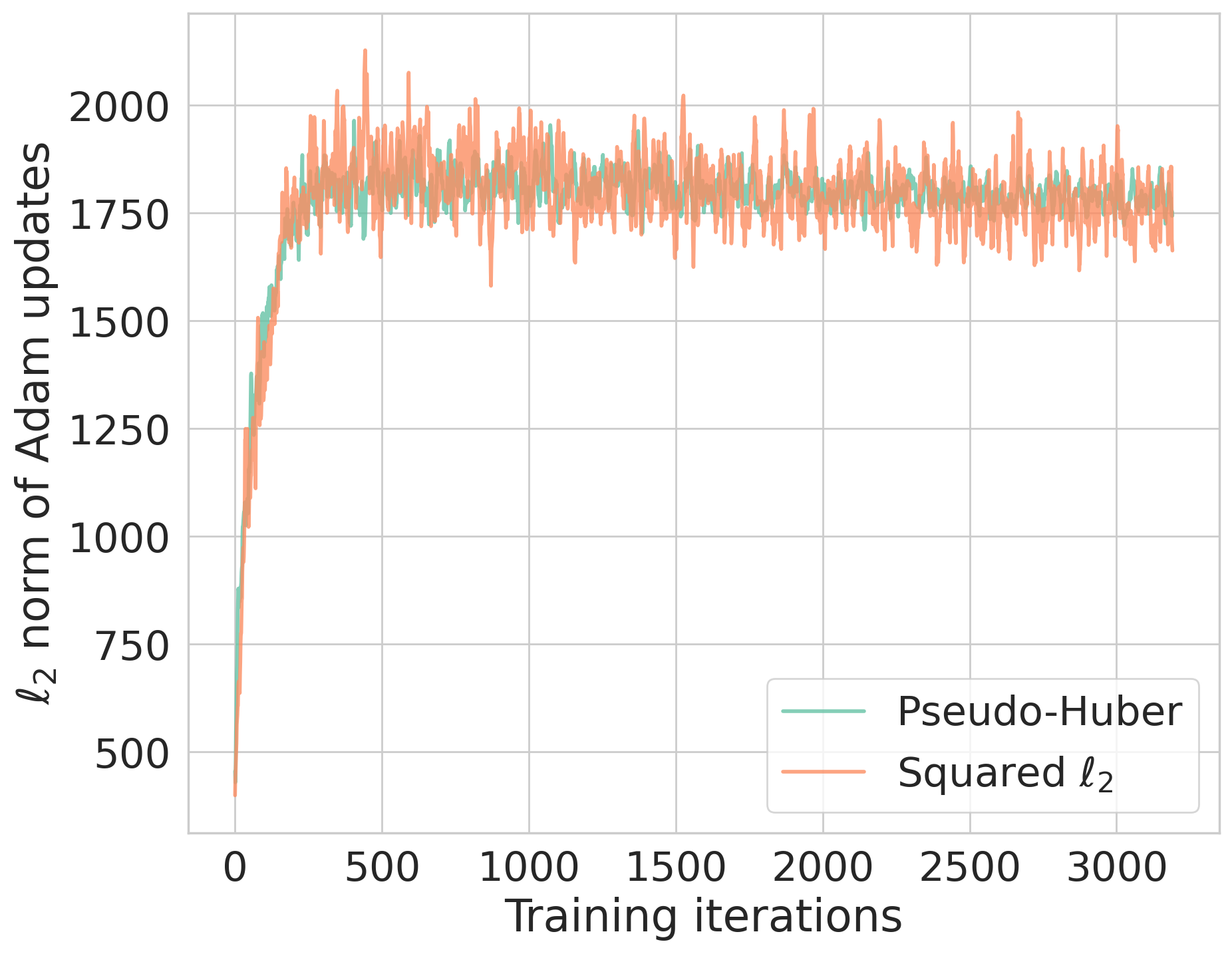}
        \caption{Comparing Adam updates.}\label{fig:adam}
    \end{subfigure}
    \caption{(a) The shapes of various metric functions. (b) The $\ell_2$ norms of parameter updates in Adam optimizer. Curves are rescaled to have the same mean. The Pseudo-Huber metric has lower variance compared to the squared $\ell_2$ metric.}
    \label{fig:pseudo_huber}
\end{figure}

\paragraph{Pseudo-Huber losses and variance reduction} In \cref{fig:pseudo_huber}, we provide additional analysis for the Pseudo-Huber metric proposed in \cref{sec:pseudo_huber}. We show the shapes of squared $\ell_2$ metric, as well as Pseudo-Huber losses with various values of $c$ in \cref{fig:pseudo_huber_shape}, illustrating that Pseudo-Huber losses smoothly interpolates between the $\ell_1$ and squared $\ell_2$ metrics. In \cref{fig:adam}, we plot the $\ell_2$ norms of parameter updates retrieved from the Adam optimizer for models trained with squared $\ell_2$ and Pseudo-Huber metrics. We observe that the Pseudo-Huber metric has lower variance compared to the squared $\ell_2$ metric, which is consistent with our hypothesis in \cref{sec:pseudo_huber}.

\paragraph{Samples}
We provide additional uncurated samples from iCT and iCT-deep models on both CIFAR-10 and ImageNet $64\times 64$. See \cref{fig:cifar10_ict,fig:cifar10_ict_deep,fig:imagenet_ict,fig:imagenet_ict_deep}. For two-step sampling, the intermediate noise level $\sigma_{i_2}$ is 0.821 for CIFAR-10 and 1.526 for ImageNet $64\times 64$ when using iCT. When employing iCT-deep, $\sigma_{i_2}$ is 0.661 for CIFAR-10 and 0.973 for ImageNet $64\times 64$.

\newpage
\begin{figure}
    \centering
    \begin{subfigure}[b]{\textwidth}
        \includegraphics[width=\textwidth]{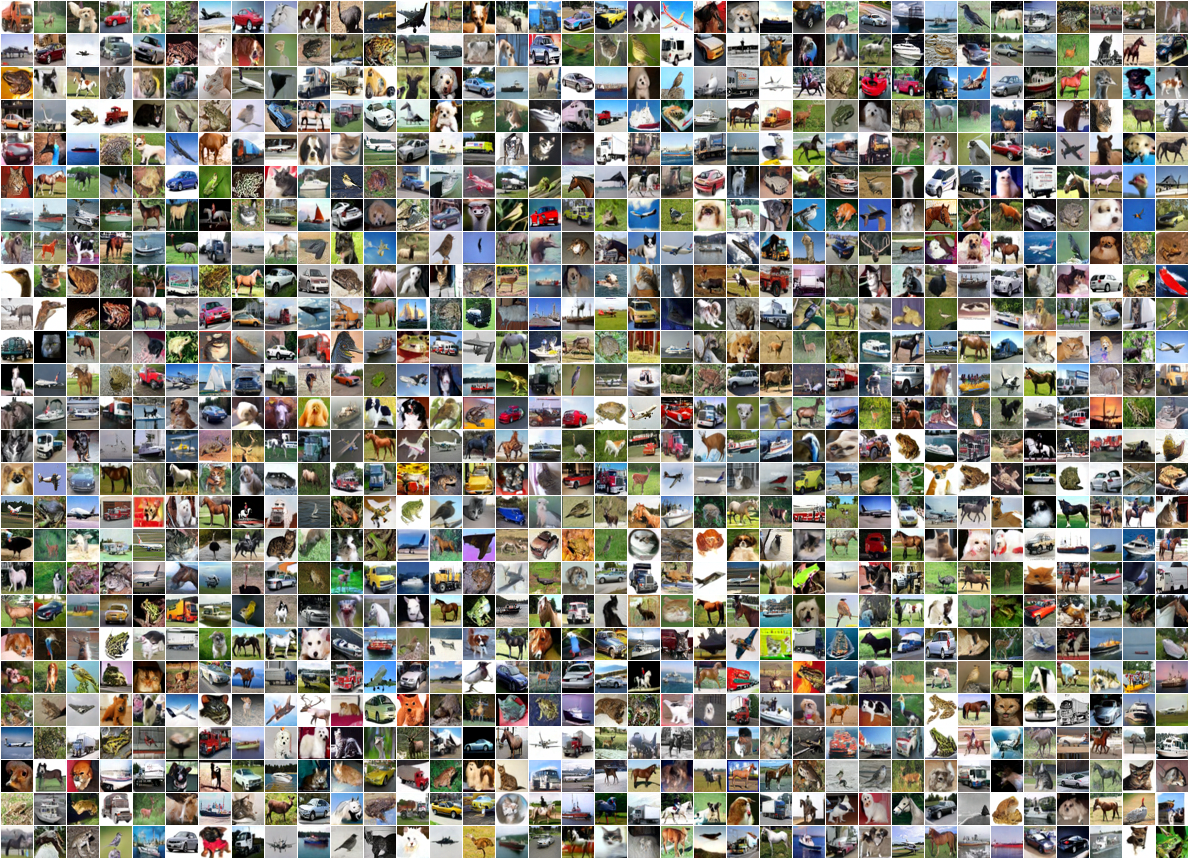}
        \caption{One-step samples from the iCT model on CIFAR-10 (FID = 2.83).}
    \end{subfigure}\\
    \vspace{2em}
    \begin{subfigure}[b]{\textwidth}
        \includegraphics[width=\textwidth]{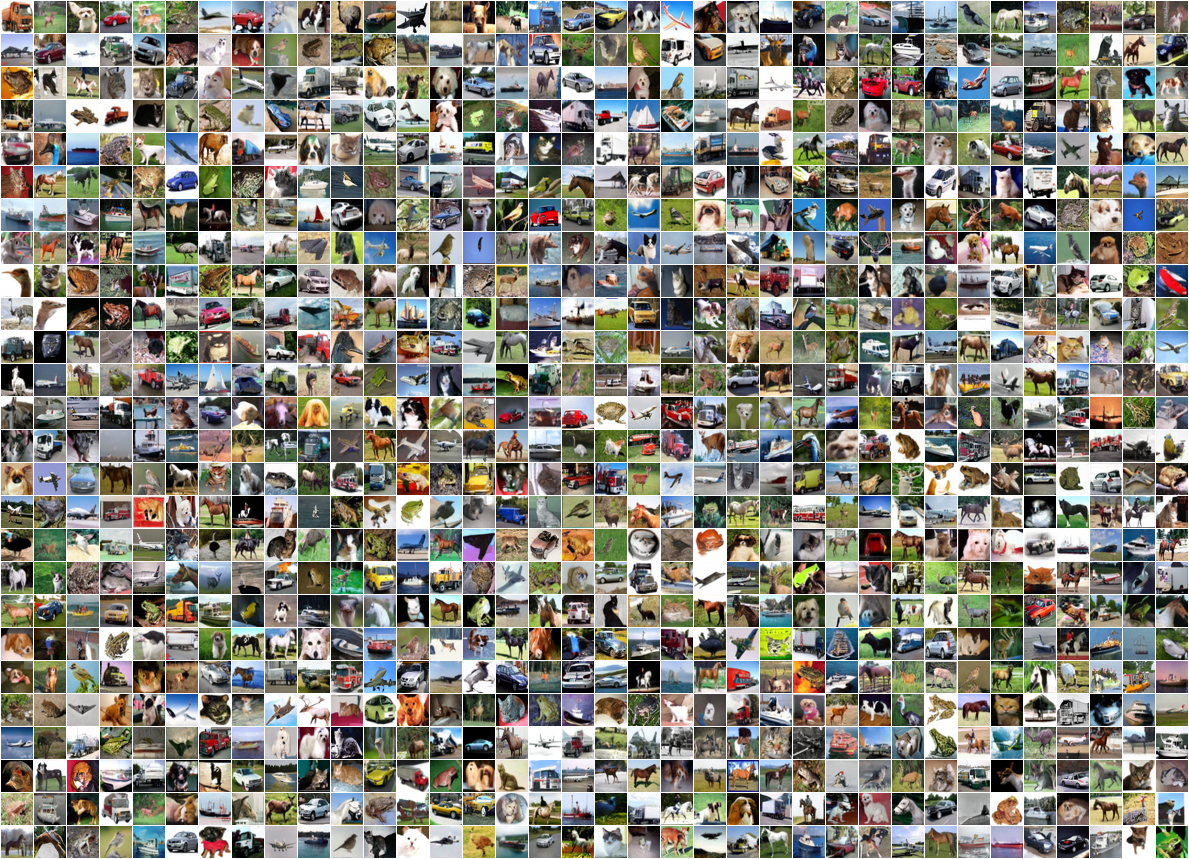}
        \caption{Two-step samples from the iCT model on CIFAR-10 (FID = 2.46).}
    \end{subfigure}
    \caption{Uncurated samples from iCT models on CIFAR-10. All corresponding samples use the same initial noise.}
    \label{fig:cifar10_ict}
\end{figure}
\newpage
\begin{figure}
    \centering
    \begin{subfigure}[b]{\textwidth}
        \includegraphics[width=\textwidth]{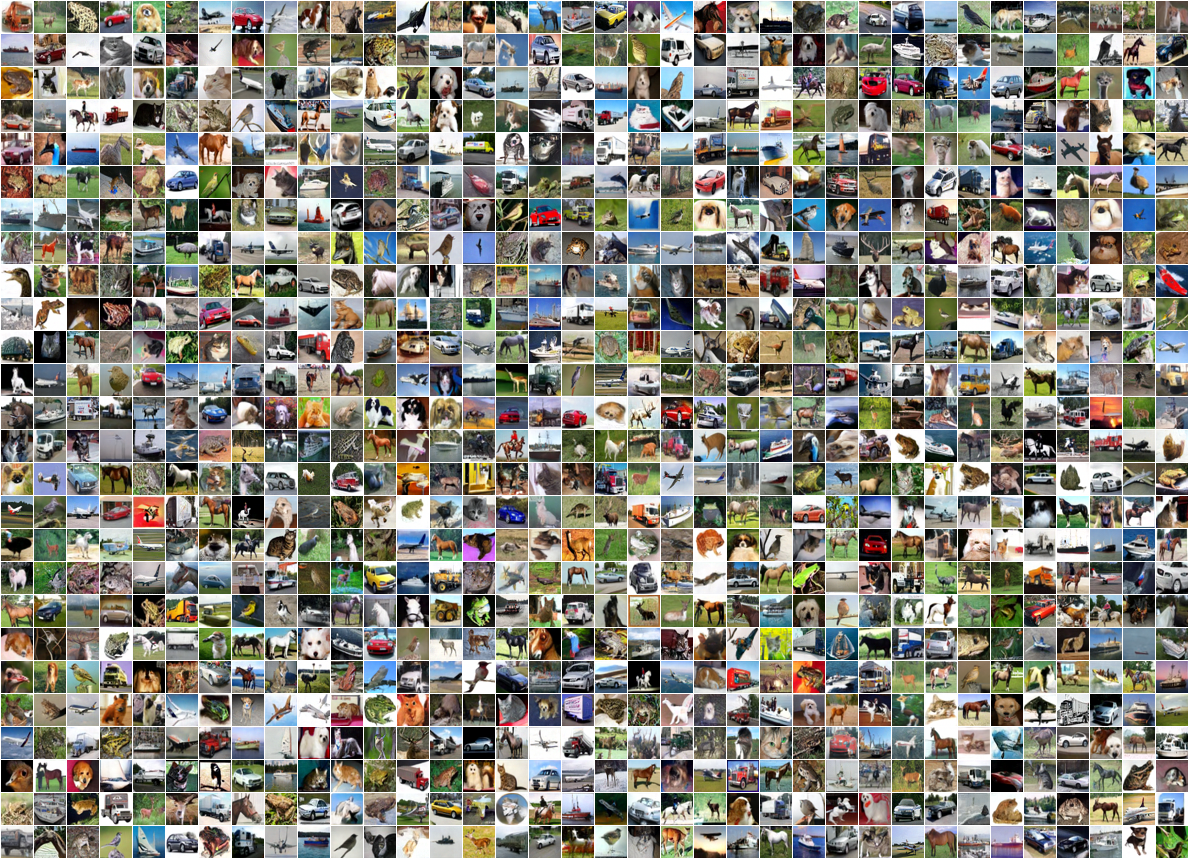}
        \caption{One-step samples from the iCT-deep model on CIFAR-10 (FID = 2.51).}
    \end{subfigure}\\
    \vspace{2em}
    \begin{subfigure}[b]{\textwidth}
        \includegraphics[width=\textwidth]{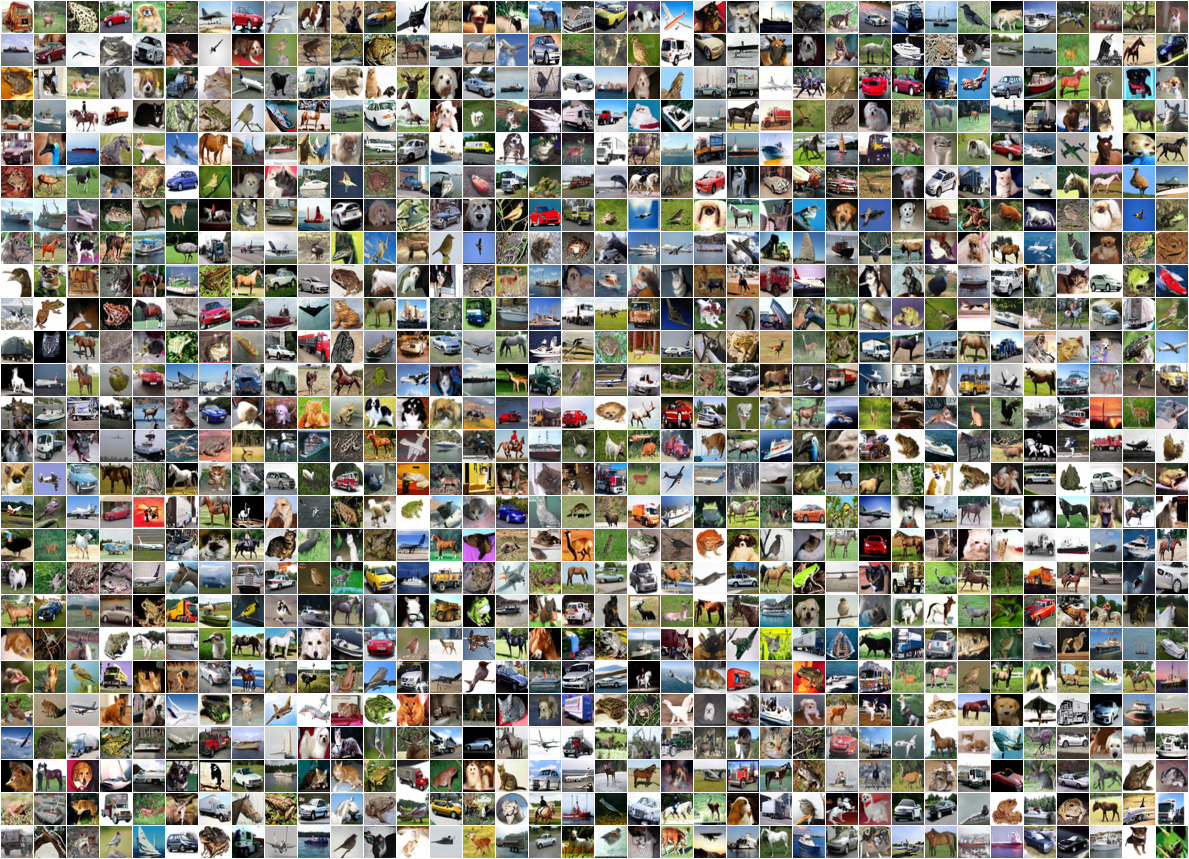}
        \caption{Two-step samples from the iCT-deep model on CIFAR-10 (FID = 2.24).}
    \end{subfigure}
    \caption{Uncurated samples from iCT-deep models on CIFAR-10. All corresponding samples use the same initial noise.}
    \label{fig:cifar10_ict_deep}
\end{figure}
\newpage
\begin{figure}
    \centering
    \begin{subfigure}[b]{\textwidth}
        \includegraphics[width=\textwidth]{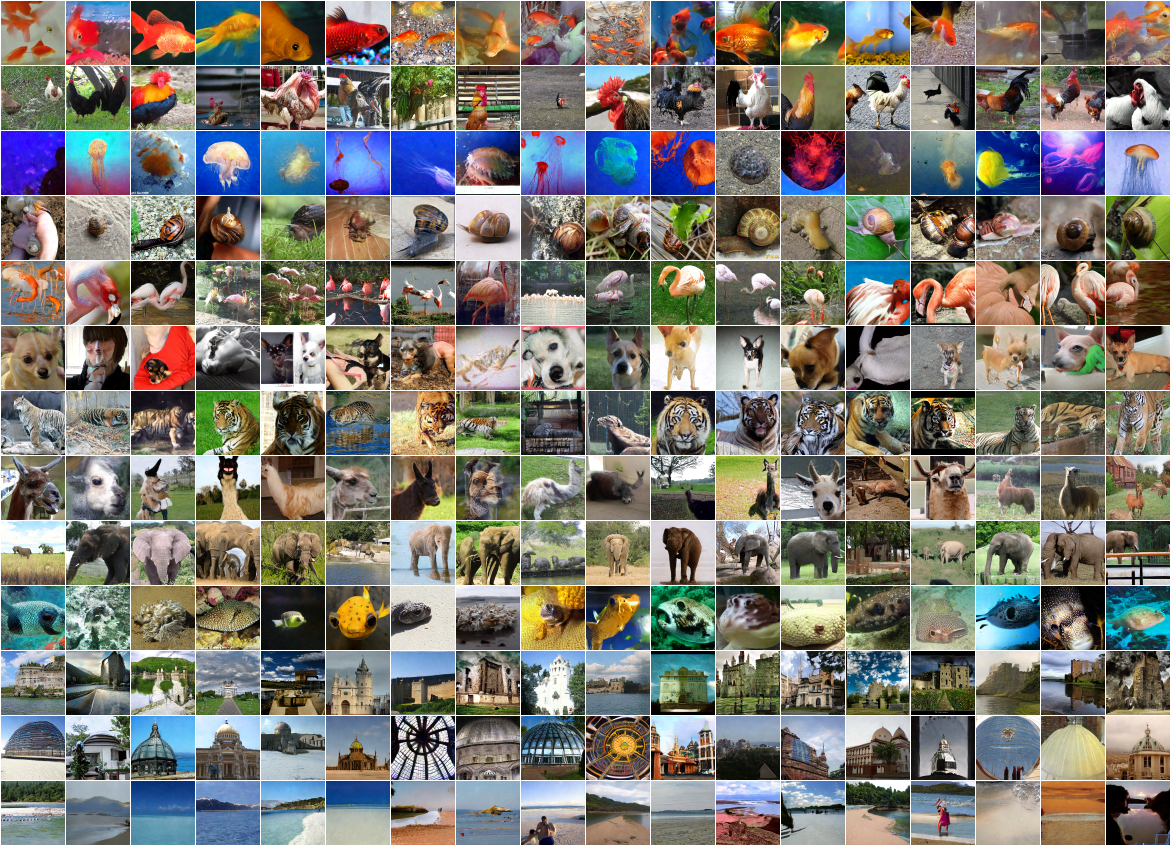}
        \caption{One-step samples from the iCT model on ImageNet $64\times 64$ (FID = 4.02).}
    \end{subfigure}\\
    \begin{subfigure}[b]{\textwidth}
        \includegraphics[width=\textwidth]{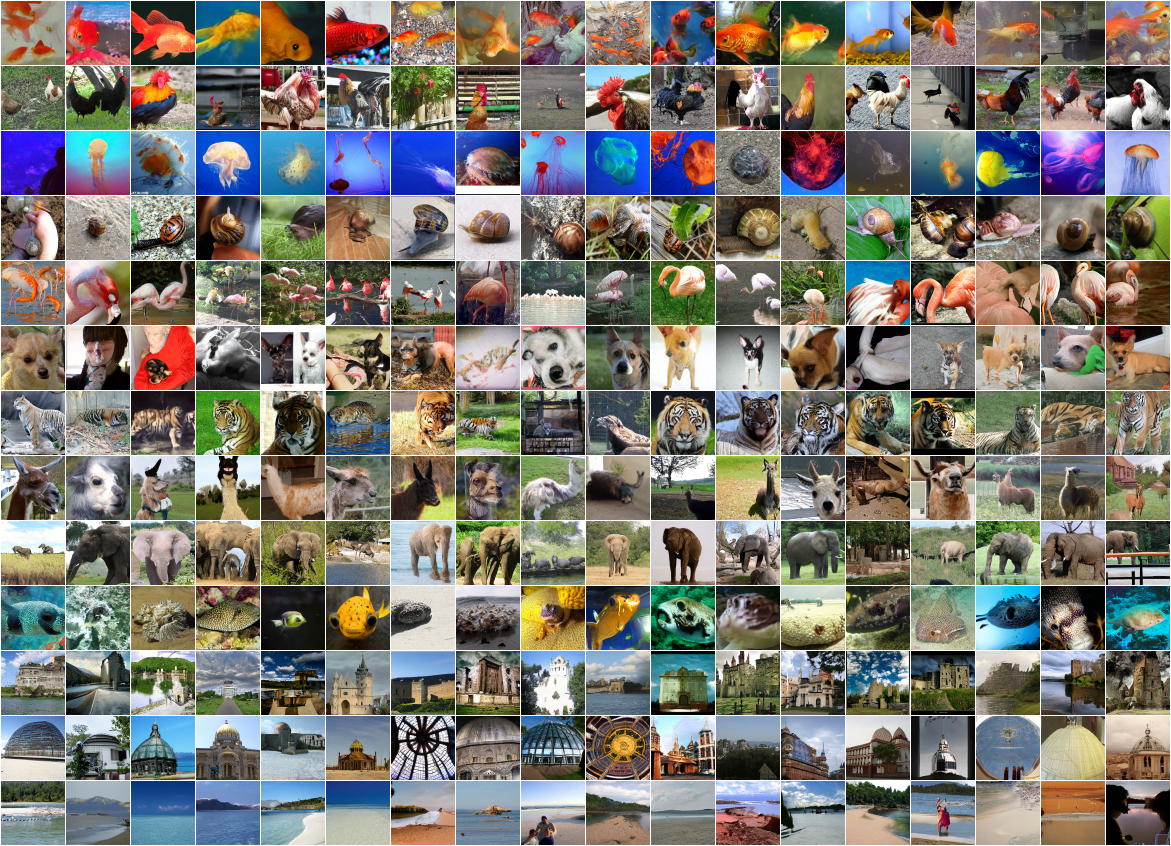}
        \caption{Two-step samples from the iCT model on ImageNet $64\times 64$ (FID = 3.20).}
    \end{subfigure}
    \caption{Uncurated samples from iCT models on ImageNet $64\times 64$. All corresponding samples use the same initial noise.}
    \label{fig:imagenet_ict}
\end{figure}

\newpage
\begin{figure}
    \centering
    \begin{subfigure}[b]{\textwidth}
        \includegraphics[width=\textwidth]{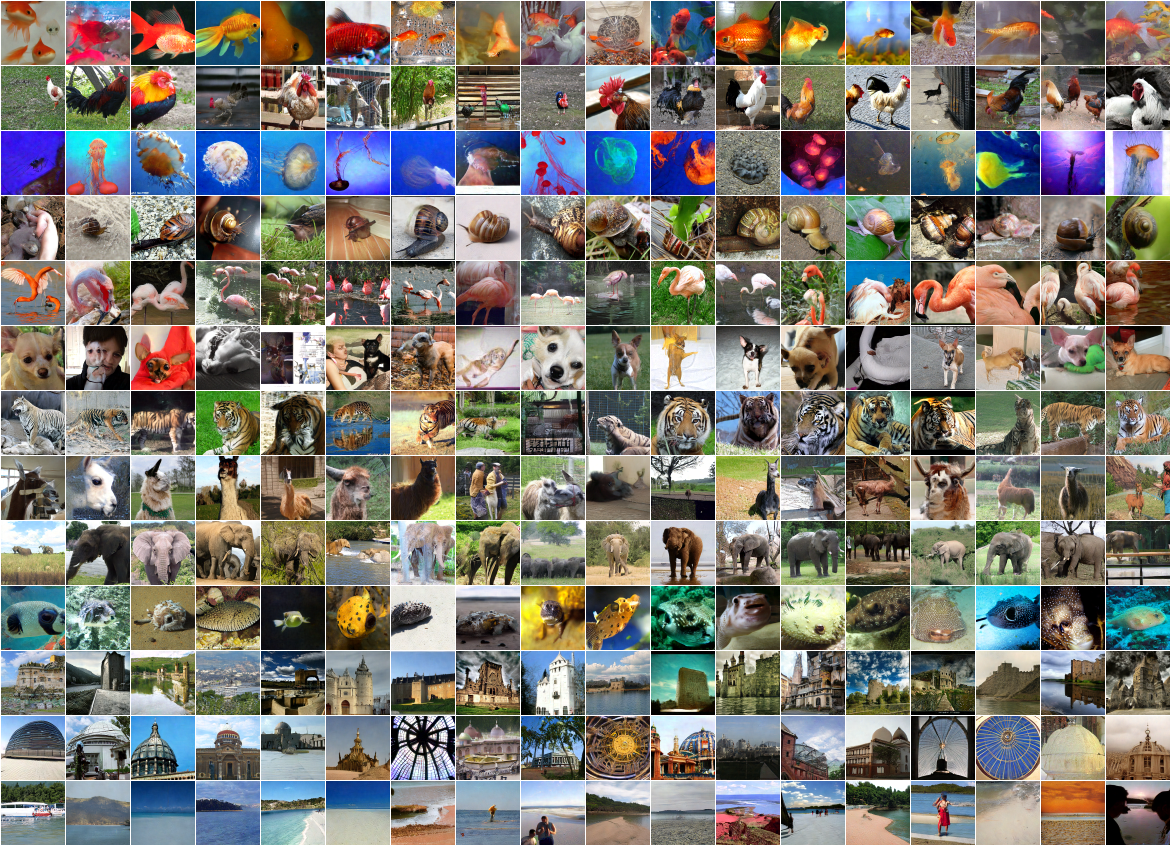}
        \caption{One-step samples from the iCT-deep model on ImageNet $64\times 64$ (FID = 3.25).}
    \end{subfigure}\\
    \begin{subfigure}[b]{\textwidth}
        \includegraphics[width=\textwidth]{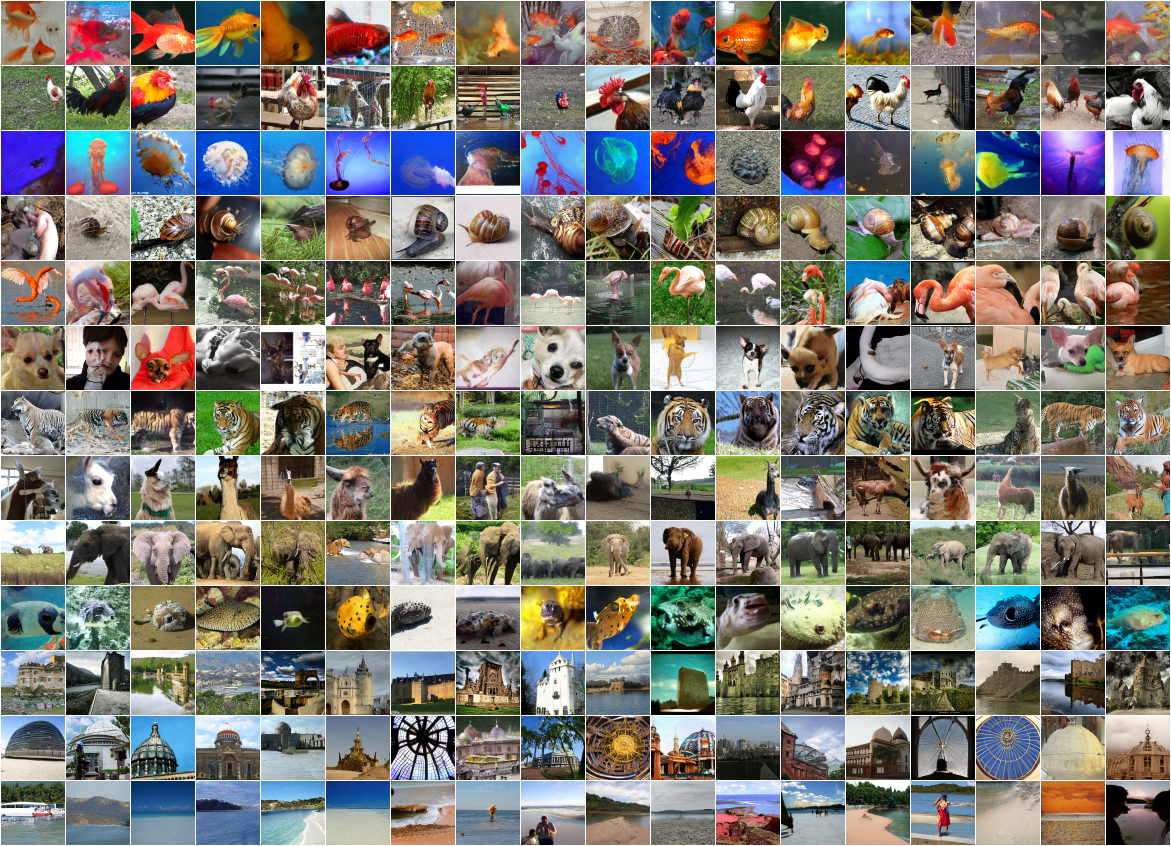}
        \caption{Two-step samples from the iCT-deep model on ImageNet $64\times 64$ (FID = 2.77).}
    \end{subfigure}
    \caption{Uncurated samples from iCT-deep models on ImageNet $64\times 64$. All corresponding samples use the same initial noise.}
    \label{fig:imagenet_ict_deep}
\end{figure}

\end{document}